\documentclass[10pt]{article}
\usepackage[accepted]{rlj}
\usepackage{amssymb}
\usepackage{mathtools}
\usepackage{graphicx}
\usepackage{subcaption}
\usepackage[space]{grffile}
\usepackage{url}
\usepackage{algorithm}
\usepackage{algorithmic}
\usepackage{booktabs}
\usepackage{multirow}
\usepackage{pgfplots}
\usepackage{tikz}
\usepackage{pifont}
\usepackage{colortbl}
\usepackage{amsthm}
\usepackage{enumitem}
\usepackage{pgfplots}
\usepackage{subcaption} 
\usetikzlibrary{pgfplots.groupplots}
\pgfplotsset{compat=1.18}

\definecolor{pogp}{RGB}{55,126,184}    
\definecolor{dql}{RGB}{228,26,28}      
\definecolor{td3}{RGB}{77,175,74}      
\usepackage{amsmath,amssymb}
\usepackage{xcolor}

\usetikzlibrary{
    positioning,
    arrows.meta,
    shapes.geometric,
    shapes.misc,
    fit,
    backgrounds,
    calc,
    decorations.pathreplacing,
    patterns
}

\newtheorem{definition}{Definition}
\newtheorem{theorem}{Theorem}
\newtheorem{proposition}{Proposition}
\newtheorem{corollary}{Corollary}
\newtheorem{remark}{Remark}

\definecolor{pogpc}{HTML}{D62728}
\definecolor{dqlc}{HTML}{1F77B4}
\definecolor{sacc}{HTML}{FF7F0E}
\definecolor{td3c}{HTML}{2CA02C}
\definecolor{ppoc}{HTML}{9467BD}
\definecolor{dppoc}{HTML}{8C564B}
\definecolor{sdacc}{HTML}{E377C2}
\definecolor{fqlc}{HTML}{BCBD22}
\definecolor{chaincolor}{HTML}{4E79A7}
\definecolor{stopcolor}{HTML}{E15759}
\definecolor{contcolor}{HTML}{76B7B2}

\usepackage{tikz}
\usetikzlibrary{arrows.meta, positioning, fit, backgrounds}
\usepackage{xcolor}

\definecolor{brandEmerald}{HTML}{10B981}
\definecolor{brandNavy}{HTML}{0F172A}
\definecolor{brandSlate}{HTML}{64748B}
\definecolor{bgGray}{HTML}{F1F5F9}

\tikzset{
  arrow/.style={-Latex, thick},
  data/.style={draw, rounded corners, align=center, minimum width=28mm, minimum height=9mm, fill=white},
  block/.style={draw, rounded corners, align=center, minimum width=40mm, minimum height=10mm, fill=white},
  smallblock/.style={draw, rounded corners, align=center, minimum width=30mm, minimum height=8mm, fill=white},
  op/.style={draw, diamond, aspect=2.0, inner sep=1pt, align=center, fill=white}
}

\usepackage{tikz}
\usetikzlibrary{arrows.meta, positioning, fit, backgrounds}

\usepackage{xcolor}
\definecolor{brandEmerald}{HTML}{10B981}
\definecolor{brandNavy}{HTML}{0F172A}
\definecolor{brandSlate}{HTML}{64748B}
\definecolor{bgGray}{HTML}{F1F5F9}

\tikzset{
  arrow/.style={-Latex, thick},
  data/.style={draw, rounded corners, align=center, minimum width=26mm, minimum height=8mm, fill=white},
  block/.style={draw, rounded corners, align=center, minimum width=34mm, minimum height=10mm, fill=white},
  op/.style={draw, diamond, aspect=2, inner sep=1.2pt, align=center, fill=white}
}

\title{Learning When to Stop: Prefix-Optimal Dynamic Diffusion Policies for Continuous Control}

\setrunningtitle{Prefix-Optimal Generative Policies}

\author{Rohit Kumar Salla, Manoj Saravanan, Simon Stepputtis}
\emails{rohits25@vt.edu, manoj663@vt.edu, stepputtis@vt.edu}
\affiliations{Virginia Polytechnic Institute and State University (Virginia Tech),
Blacksburg, Virginia, USA}

\contribution{%
  \textbf{Prefix Value Function:} A value function trained via TD bootstrapping over the denoising chain that serves as an auxiliary objective improving intermediate action quality.%
}{%
  Diffusion policies optimize only the terminal action, leaving intermediate iterates unsupervised and non-executable.%
}

\contribution{%
  \textbf{Learned Early Stopping:} A stopping rule requiring no separately learned halting network derived from consecutive prefix value differences that halts denoising when further refinement is predicted to yield negligible improvement.%
}{%
  Prior acceleration methods either apply fixed step reductions or train separate adapter networks decoupled from intermediate action quality.%
}

\contribution{%
  \textbf{Empirical Validation:} Prefix training improves full-chain performance by approximately 4.6\% over state-of-the-art baselines while enabling $\approx 2.7\times$ compute savings across four MuJoCo environments and 12 baselines.%
}{%
  The dual benefit suggests intermediate supervision acts as a beneficial auxiliary objective beyond enabling adaptive compute allocation.%
}

\keywords{Diffusion policies, Efficient Computing, Prefix Value Functions, Continuous Control}

\summary{
Diffusion policies generate actions through a $K$-step denoising
chain, but standard training optimizes only the terminal output.
We introduce POGP, which learns a prefix value function at every
denoising step through a Bellman-style recursion. The prefix value
function improves intermediate action quality and provides a
test-time signal for adaptive stopping. Across four MuJoCo
environments and 12 baselines, POGP outperforms D3P by
approximately 3.5\% while using 18.2\% fewer iterations.
Relative to a fixed 20-step chain, POGP reduces denoising
computation by approximately $2.7\times$ while retaining
near-full-chain performance.
}

\begin{document}
\makeCover
\maketitle

\begin{abstract}

Diffusion policies have emerged as a powerful policy class for continuous control, but their iterative denoising process remains the primary computational bottleneck. Optimizing the computational effort spent per action is critical for improving the practical adoption of diffusion policies, allowing them to allocate more iterations only when the task demands it. To this end, we propose POGP (Prefix-Optimal Generative Policies), which learns a prefix value function (PVF) at every intermediate denoising step via a Bellman-style recursion over the chain. 
The PVF provides both an auxiliary optimization target that shapes intermediate outputs into high-quality actions and a test-time stopping criterion that halts denoising when further iterations are not expected to yield meaningful improvement. 
Through extensive experiments on four MuJoCo environments against 12 baselines, we demonstrate that POGP reduces the required denoising iterations by approximately $2.7\times$ while retaining near-full performance. Furthermore, compared to current state-of-the-art dynamic diffusion baselines, we show that prefix training improves final task performance by approximately 3.5\%, suggesting that supervising intermediate denoising steps acts as a beneficial auxiliary objective beyond its role in enabling early stopping. \textbf{Project Page}: https://rohitsalla.github.io/POGP

\end{abstract}

\section{Introduction}
\label{sec:introduction}

Diffusion policies for continuous control generate actions through an iterative denoising chain, typically requiring $K$ sequential neural network evaluations per action~\citep{ho2020denoising, diffusionpolicy, wang2022diffusion}. 
Running all $K$ iterations of the diffusion process is the primary computational bottleneck of diffusion policies as every action incurs the same computational cost regardless of the task's difficulty.
In practice, however, many actions taken by an embodied system along some trajectory are routine and well-determined after only a few denoising steps, while others, such as contact-rich transitions, disturbance recovery, or mode switches strongly benefit from extended refinement steps. 
For example, consider an embodied agent perturbed by some external force.
When the disturbance is applied to the system, the policy benefits from additional refinement steps to improve the system's overall behavior while saving compute at other times, which is especially important for local and edge deployments on real robot hardware. 
Current diffusion policy frameworks~\citep{dppo, sdac, wang2024dacer, idql} do not distinguish between these cases, spending compute uniformly across the trajectory.

Recent approaches leveraging diffusion policies~\citep{yu2025d3p} address this issue by estimating the number of iterations needed in order to reach a sufficiently performant action. 
However, because this adapter is independent of the policy, the intermediate denoising steps are never optimized to be executable as the model is only trained on the full diffusion process. 
We take a different approach by jointly training a prefix value function (PVF) alongside the diffusion policy itself, which both improves intermediate action quality as an auxiliary objective and provides a natural per-step stopping signal at test time.

In this work, we propose a novel approach for dynamic diffusion -- Prefix-Optimal Generative Policies (POGP) -- that \textbf{leverages a learned prefix value function (PVF) to assess when diffusion iterations can be \textit{stopped early} while improving the quality of internal diffusion iterations}. 
POGP trains a PVF $V_t(s, a_t)$ for each intermediate action $a_t$ in the $K$-step diffusion chain $a_K \to a_0$ that intuitively measures how valuable it would be to execute action $a_t$ at step $t$.
During training, the PVF is trained to estimate an action's value by completing all $K$ diffusion steps, co-evolving as an optimization target for the diffusion step itself. 
Crucially, rather than regressing each $V_t$ independently against the final action's Q-value, which would require expensive full-chain rollouts per step, we derive a Bellman recursion over the denoising chain that propagates value information backwards through refinement steps, enabling efficient single-chain training with stable TD targets.
Intuitively, the PVF serves a dual purpose: 1) it improves the quality of the intermediate policy actions $a_t$; and 2) provides a metric for stopping the diffusion process early in order to save time and compute budget. 

We demonstrate that, across four MuJoCo tasks and 12 baselines, POGP achieves an IQM of $0.942$ with a $95\%$ confidence interval~\citep{agarwal2021deep}, retains $\geq 97.8\%$ of its full-diffusion process task performance at less than half the computation cost and outperforms current state-of-the-art baselines by $\approx4.6\%$.
In summary, POGP proposes the following contributions:
\begin{itemize}
    \item a prefix value function trained via TD bootstrapping alongside the diffusion model over the denoising chain, ultimately improving the quality of the intermediate steps; 
    \item a principled early stopping rule derived from prefix value differences across consecutive diffusion steps, leveraged to stop the diffusion policy early if the value of future steps is predicted to be minimal; and
    \item empirical evidence that prefix training improves full-chain performance by up to $4.6\%$ while enabling $\approx2.7\times$ compute savings, on average. 
\end{itemize}

\section{Related Work}

\paragraph{Diffusion Policies for Control.}
Diffusion-based policies have emerged as a powerful class of decision models for continuous control and robot learning because they can represent complex, multimodal action distributions that are difficult for unimodal actors to capture. Prior work includes action-diffusion policies for manipulation \citep{diffusionpolicy}, diffusion-based trajectory planning \citep{janner2022planning}, and diffusion-parameterized reinforcement learning methods such as Diffusion-QL~\citep{wang2022diffusion}, Efficient Diffusion Policies~\citep{sdac}, and IDQL~\citep{idql}. A common feature of these methods is that action generation proceeds through a multi-step denoising chain, so inference cost scales with the number of refinement steps. Moreover, existing objectives primarily supervise the terminal denoised action rather than the intermediate prefixes of the chain. As a result, intermediate iterates are generally not trained to be high-value executable decisions. This creates a train--test mismatch whenever inference must be interrupted or truncated under real-time compute constraints. POGP addresses this gap by explicitly assigning value to every prefix of the refinement process through a learned prefix value function thereby turning variable diffusion depth into a first-class object of policy optimization rather than a post hoc deployment heuristic.

\paragraph{Anytime and Adaptive Computation.}
Our work is also related to the broad area on anytime prediction and adaptive computation. It studies systems whose outputs remain useful when computation is stopped early and improve as additional computation is allocated \cite{zilberstein1996anytime,graves2017act,huang2016stochasticdepth,banino2021pondernet}. A central challenge in this setting is that models are often optimized only for full-depth performance, which often yields poor intermediate outputs. While exposing the model to variable computation during training, we can improve robustness across budgets and sometimes regularize full-budget performance. POGP brings this perspective into continuous control, where early outputs are not merely intermediate to representations but executable actions whose quality must be judged by downstream return. This distinction is important because unlike adaptive-computation settings in supervised learning, where intermediate predictions are evaluated by task loss. Here, each refinement prefix induces a policy with its own control performance. Our formulation therefore connects adaptive computation to reinforcement learning through value-based supervision over diffusion prefixes.

\paragraph{Efficient and Dynamic Diffusion.}
Recent efforts to reduce the cost of diffusion inference broadly fall into two categories. The first seeks to improve a predetermined operating point by reducing the number of denoising steps required for good performance. For example through distillation, consistency models or consistency-style policy classes \cite{progressive_distillation,consistency_models,ding2024consistency_policy}. These approaches substantially reduce latency, but they primarily optimize a chosen small-step generator rather than ensuring that arbitrary prefixes of the original diffusion chain are themselves high-value actions. The second line considers adaptive-budget allocation, where the amount of denoising performed is adjusted online. Most closely related is D3P, which introduces a lightweight adaptor to allocate denoising depth at test time \cite{yu2025d3p}. POGP differs in both objective and mechanism: instead of learning a separate stopping module, it trains a prefix value function that directly evaluates the marginal utility of further refinement. The same value-grounded object is used both to improve prefix quality during learning and to determine when additional denoising is no longer worthwhile at inference time.

\section{Background and Notation}
\label{sec:background}

\paragraph{Notation}
We summarize key notation as follows: $s \in \mathcal{S}$ and $a_t \in \mathcal{A}$ represent the environment state and action at refinement level $t$, respectively, where $t \in \{0,\ldots,K\}$ runs backwards ($t{=}K$ is noise, $t{=}0$ is the terminal action). 
The symbol $a_0$ denotes the fully-denoised action.

\paragraph{Markov Decision Process}
We consider discounted continuous-control MDPs $\mathcal{M}=\langle \mathcal{S},\mathcal{A},P,r,\gamma\rangle$ with $\mathcal{S}\subseteq\mathbb{R}^{d_s}$, $\mathcal{A}\subseteq\mathbb{R}^{d_a}$, transition kernel $P(\cdot\mid s,a)$, bounded reward $r:\mathcal{S}\times\mathcal{A}\to[R_{\min},R_{\max}]  $ and discount $\gamma\in(0,1)$. For a (stationary) policy $\pi$, the state-action value is
\begin{equation}
    Q^\pi(s,a)\;:=\;\mathbb{E}_\pi\!\left[\sum_{k=0}^\infty \gamma^k\, r(s_k,a_k)\,\Big|\, s_0=s,\; a_0=a\right],
\end{equation}
and $V^\pi(s):=\mathbb{E}_{a\sim \pi(\cdot\mid s)}[Q^\pi(s,a)]$.

\paragraph{Diffusion Policies}
A diffusion policy represents $\pi(a\mid s)$ implicitly via a $K$-step reverse denoising chain \citep{ho2020denoising,wang2022diffusion}: sample $a_K\sim\mathcal{N}(0,I)$ and apply the denoiser
\begin{equation}
    a_{t-1} \;=\; \pi_\theta(s,a_t,t), \qquad t=K,K-1,\ldots,1,
\end{equation}
to obtain $a_0$. 
In this work, we use a deterministic denoiser (DDIM-style) throughout all experiments. The framework extends to stochastic denoisers by replacing the deterministic refinement transitions with conditional distributions, which we leave as future work.

\paragraph{Prior diffusion RL.}
Diffusion-QL~\citep{wang2022diffusion} and IDQL~\citep{idql} establish diffusion actors as an expressive policy class in offline (and offline-to-online) RL, while recent on-policy and actor-critic variants scale diffusion policies to fully online training~\citep{dppo,wang2024dacer,qvpo}. A shared limitation across these objectives is their \emph{terminal-only} optimization in which they evaluate the environment through the terminal action $a_0$.
This causes intermediate states to transition in an unsupervised way from noise to the terminal action, but introduce the drawback that internal states are not suitable for direct execution. 
However, being able to utilize non-complete diffusion chains is critical for dynamic-budget diffusion policies to allow for early stopping of the iterative generation process. 

\section{Prefix-Optimal Generative Policies}
\label{sec:Method}

Diffusion policies generate actions through a $K$-step denoising chain that transforms random noise into the terminal output action ($a_K \to \cdots \to a_0$). 
Standard training optimizes only the final action $a_0$, leaving intermediate actions $\{a_t\}_{t=1}^{K-1}$ without supervision, causing them to be non-executable. a necessary requirement for dynamically choosing the number of refinement steps.  
POGP addresses this by learning a prefix value function $V_t(s,a_t)$ at every denoising step, providing both a training signal that co-trains with the diffusion policy to shape intermediate outputs to be of high quality, as well as a runtime metric that indicates when further denoising is no longer expected to yield significant improvement.
Learning the prefix value function requires two core components:
1) a formal definition of prefix values grounded in the environment's Q-function; 
and 2) a Bellman-style recursion over the denoising chain that enables efficient TD-based training of $V_t$ from a single shared chain per gradient step. 
An overview of our method is shown in Algorithm~\ref{alg:pogp_main}. 

\begin{algorithm}[t]
\caption{POGP (Prefix-Optimal Generative Policies)}
\label{alg:pogp_main}
\begin{algorithmic}[1]
\STATE \textbf{Input:} Replay buffer $\mathcal{D}$, chain length
$K=20$, uniform hazard rate $h=1/K=0.05$, networks
$\pi_\theta,Q_{\phi_{1,2}},V_\psi,g_\omega$.
\STATE \textbf{Initialize:} Target networks $\phi'_{1,2}, \psi'$.
\FOR{each gradient step}
  \STATE Collect transition $(s,a,r,s')$; add to $\mathcal{D}$.
  \STATE Sample minibatch from $\mathcal{D}$.
  \STATE \textbf{Update} $Q$: $y_Q \leftarrow r + \gamma \min_i Q_{\phi'_i}(s', a'_0)$; minimize $\mathcal{L}_Q$. \hfill $\triangleright$ Critic update
  \STATE Sample $t \sim \mathrm{Unif}\{1,\ldots,K\}$, $a_K \sim \mathcal{N}(0,I)$.
  \STATE Generate chain: $a_K \rightarrow \cdots \rightarrow a_t \rightarrow \cdots \rightarrow a_0$. \hfill $\triangleright$ Shared chain
  \STATE \textbf{Update} $V$: $y_V \leftarrow h\,\min_i Q_{\phi'_i}(s,a_0) + (1-h)\, V_{\psi'}(s,a_{t-1},t{-}1)$; minimize $\mathcal{L}_V$. \hfill $\triangleright$ Prefix Bellman target
  \STATE \textbf{Update} $\pi$: Minimize $\mathcal{L}_\pi$ (Eq.~\ref{eq:actor_loss_final}) with gate $w(n)$. \hfill $\triangleright$ Policy update
  \STATE Sample $T \sim \mathrm{Unif}\{1,\ldots,K\}$; \textbf{update} $g$: minimize $\mathcal{L}_g$ (Eq.~\ref{eq:lg}) using $a_T, a_0$ from the shared chain. \hfill $\triangleright$ Projection update
  \STATE Polyak-update target networks $\phi'_{1,2}, \psi'$.
\ENDFOR
\STATE \textbf{Return:} Trained networks $\pi_\theta, Q_{\phi_{1,2}}, V_\psi, g_\omega$.
\end{algorithmic}
\end{algorithm}

\subsection{The Prefix Value Function}
\label{sec:prefix_value}

We consider iterative refinement policies defined by a deterministic denoiser $\pi_\theta$:
\begin{equation}
a_{t-1} = \pi_\theta(s, a_t, t), \quad a_K \sim \mathcal{N}(0,I), \quad t = K, \ldots, 1,
\label{eq:denoise_chain}
\end{equation}
conditioned on environment state $s$.

To optimize intermediate steps, we require a scalar measure of how ``promising'' a partial result $a_t$ is. The natural criterion is the expected return of completing the chain from $a_t$:

\begin{definition}[Prefix Value]
\label{def:prefix_value}
For environment state $s$ and intermediate action $a_t$ at refinement step $t$:
\begin{equation}
V^\pi_t(s, a_t) \;:=\; Q^\pi\!\bigl(s,\; \pi_{\mathrm{full}}(s, a_t)\bigr),
\label{eq:prefix_value_def}
\end{equation}
where $\pi_{\mathrm{full}}(s, a_t)$ denotes the result of running the denoiser from step $t$ to $0$.
\end{definition}

The prefix value $V_t$ answers: ``If we complete the denoising chain starting from $a_t$, what return do we expect?'' High $V_t$ indicates the chain is on track to produce a high-$Q$ action; low $V_t$ signals that continued refinement from a different starting point would have been preferable. Note that for a deterministic denoiser, $\pi_{\mathrm{full}}(s, a_t) = a_0$ for any $t$ on the same chain, so $V^\pi_t(s, a_t) = Q^\pi(s, a_0)$. The value of learning $V_t$ with a function approximator $V_\psi(s, a_t, t)$ is that the network observes only the intermediate action $a_t$ not the full chain and must therefore learn to predict the quality of the terminal action that \emph{would result} from completing the chain from $a_t$. This enables two capabilities: (1) training the denoiser to maintain high predicted value across all $t$ and (2) deployment-time monitoring via $V_\psi$ to detect when further refinement yields diminishing returns.

Evaluating Eq.~\eqref{eq:prefix_value_def} na\"ively requires completing the chain ($t$ forward passes) for every prefix at every gradient step, multiplying training cost by $O(K^2)$. We overcome this by recognizing that refinement possesses \emph{temporal structure}: $V_t$ and $V_{t-1}$ are not independent quantities but related through the denoising dynamics. This structure permits efficient credit assignment via dynamic programming.

\subsubsection{Prefix Bellman Recursion}
\label{sec:prefix_bellman}

Because consecutive prefix values are related through the denoising dynamics ($V_t$ depends on $a_{t-1} = \pi_\theta(s, a_t, t)$, which determines $V_{t-1}$), we can propagate value information backward through the chain via a one-step recursion rather than evaluating each $V_t$ independently. Concretely, the prefix value at step $t$ decomposes into a weighted combination of the final action's Q-value and the next step's bootstrapped prefix value:

\begin{equation}
V_t(s,a_t)
=
\underbrace{h\,Q(s,a_0)}_{\text{terminal-value anchor}}
+
\underbrace{(1-h)\,
V_{t-1}\!\left(s,\pi_\theta(s,a_t,t)\right)}
_{\text{bootstrap from the next refinement step}},
\label{eq:prefix_bellman}
\end{equation}
We use the boundary condition
\[
V_0(s,a_0)=Q(s,a_0).
\]
The hazard coefficient $h\in(0,1]$ controls the balance between
anchoring each prefix target to the terminal action value and
bootstrapping from the subsequent refinement step. Larger values of
$h$ place greater weight on the terminal-value anchor, whereas smaller
values rely more heavily on the bootstrapped prefix estimate. 

Unrolling Eq.~\eqref{eq:prefix_bellman} expresses $V_t$ as a weighted
combination of the terminal-value anchor and recursively bootstrapped
prefix values. When prefix training succeeds, intermediate actions
approach the quality of the fully denoised output,
$Q(s,a_t)\approx Q(s,a_0)$, and the predicted prefix values become
approximately stable across consecutive refinement steps. This
stability provides the signal used for adaptive stopping in
Section~\ref{sec:dynamic_stopping}. We use a uniform hazard with $K = 20$ and $h = \frac{1}{K} = 0.05$ throughout all experiments.

\subsubsection{Joint Optimization with Gated Objectives}
\label{sec:learning}

POGP trains three components jointly using the Hazard Bellman recursion. The challenge is balancing terminal-action performance (the standard diffusion-RL objective) with prefix robustness (our proposed auxiliary objective) without destabilizing training.

\paragraph{Environment Critic ($Q$).} Standard double $Q$-learning on environment transitions $(s,a,r,s')$:
\begin{equation}
\mathcal{L}_Q = \sum_{i=1,2} \mathbb{E}\bigl[\bigl(Q_{\phi_i}(s,a) - (r + \gamma \min_{j} Q_{\phi'_j}(s', a'_0))\bigr)^2\bigr].
\label{eq:lq}
\end{equation}

\paragraph{Prefix Critic ($V$).} We sample $t \sim \mathrm{Unif}\{1,\ldots,K\}$ and generate a \emph{shared chain} $a_K \to \cdots \to a_0$ (critical for consistent gradients). The TD target implements the Hazard Bellman recursion:
\begin{equation}
y_V = h \,\min_i Q_{\phi'_i}(s, a_0)
\;+\; (1{-}h)\, V_{\psi'}(s, a_{t-1}, t{-}1),
\label{eq:hazard_td_target}
\end{equation}
\begin{equation}
\mathcal{L}_V = \mathbb{E}_{s, t}\bigl[\bigl(V_\psi(s, a_t, t) - y_V\bigr)^2\bigr].
\label{eq:lv}
\end{equation}
Note that $y_V$ uses the terminal action $a_0$ from the \emph{same} shared chain regardless of $t$, enabling efficient computation without $K$ separate rollouts. By the determinism of the denoiser, $a_0 = \pi_{\mathrm{full}}(s, a_t)$, so the anchor $Q(s, a_0)$ in Eq.~\eqref{eq:hazard_td_target} corresponds exactly to the prefix value definition (Eq.~\ref{eq:prefix_value_def}).

\paragraph{Actor ($\pi$).} The policy faces a multi-objective optimization: maximize terminal action value \emph{and} maintain high prefix values. We implement this via a gated loss:
\begin{equation}
\mathcal{L}_\pi
= -\mathbb{E}\Bigl[\min_i Q_{\phi_i}(s, a_0)\Bigr]
\;+\;
w(n) \Bigl(-\mathbb{E}\bigl[V_\psi(s, a_{t-1}, t{-}1)\bigr]\Bigr).
\label{eq:actor_loss_final}
\end{equation}

The gating schedule $w(n)$ is essential: it remains at 0 until iteration $N_{\text{warm}}$ (when the $V_\psi$ RMSE stabilizes below a threshold $\tau$), then ramps via a cosine schedule to $w_{\max}$. This prevents early-stage noise in $V_\psi$ gradients from distorting the policy before meaningful prefix values are available. Both $a_0$ and $a_{t-1}$ in Eq.~\eqref{eq:actor_loss_final} derive from the same shared chain, which is necessary for gradient alignment (Appendix~\ref{app:gradient_variance}). Removing the confidence gate reduces HalfCheetah return from
347.2 to 337.0 and Ret.@5 from 99.1\% to 95.4\%.
Removing the cosine warmup yields 342.5 return and 97.2\%
retention, supporting the use of delayed and gradual prefix
supervision.

\subsection{Dynamic Diffusion at Test Time}
\label{sec:dynamic_stopping}

The trained prefix value function directly provides a stopping criterion at test time. 
The \emph{marginal gain} of an additional denoising step is:
\begin{equation}
\Delta_t = V_\psi(s, a_{t-1}, t{-}1) - V_\psi(s, a_t, t).
\end{equation}

When $\Delta_t$ is small, the learned value function predicts that further refinement yields minimal improvement to the terminal action quality. We halt when:
\begin{equation}
\Delta_t \;\leq\; \epsilon \cdot |V_\psi(s, a_t, t)|
\quad \text{for } m \text{ consecutive steps}.
\label{eq:stop_rule}
\end{equation}

This stopping rule requires no additional learned parameters and transfers across tasks (we use $\epsilon{=}0.01, m{=}2$ throughout). Dynamic compute thus emerges as a \emph{free byproduct} of learning prefix values, unlike prior work that requires separate learned stopping policies~\citep{yu2025d3p}.

When stopping at step $t > 0$, the intermediate action $a_t$ may contain residual noise from the incomplete denoising process.
To map such actions into the executable action space, we train a lightweight single-layer MLP $g_\omega$ (the \emph{completion map}) to project truncated actions toward their fully-denoised counterparts:
\begin{equation}
\mathcal{L}_g = \mathbb{E}_{T \sim \mathrm{Unif}\{1,\ldots,K\}}\bigl[\|g_\omega(s, a_T, T) - a_0\|^2\bigr],
\label{eq:lg}
\end{equation}
where $a_T$ and $a_0$ are drawn from the same shared chain used for the other loss terms. The truncation index $T$ is sampled uniformly during training so that $g_\omega$ is exposed to all levels of residual noise. The projection error $\epsilon_g(T) := \mathbb{E}[\|g_\omega(s, a_T, T) - a_0\|]$ grows as $O(\sqrt{T})$ empirically (Figure~\ref{fig:compmap_analysis}a), meaning early truncations (small $T$) yield higher fidelity; Proposition~1 (Appendix~\ref{proof:completion_bias}) bounds the resulting performance gap by $L_Q \cdot \mathbb{E}_{T \sim h}[\epsilon_g(T)] / (1-\gamma)$.

Together, this setup allows POGP to assess whether further iterations of the denoising chain are required or whether the current run can be terminated prior to reaching the full chain length $K$, with the completion map ensuring that the executed action remains high-quality even under early stopping.

\begin{table*}[]
\centering
\caption{
\textbf{Full-chain performance} (mean $\pm$ std, 10 seeds). All diffusion methods run $K{=}20$ steps. \textbf{IQM}: interquartile mean of normalised scores~\citep{agarwal2021deep}, 95\% CI in brackets. \textbf{Bold} = best; \underline{underline} = second.
}
\label{tab:main}
\footnotesize
\setlength{\tabcolsep}{4pt}
\renewcommand{\arraystretch}{1.08}
\begin{tabular}{@{}cl cccc c@{}}
\toprule
& \textbf{Method}
  & \textbf{HalfCheetah} & \textbf{Walker2d}
  & \textbf{Ant} & \textbf{Hopper}
  & \textbf{IQM} \\
\midrule
\multirow{3}{*}{\rotatebox[origin=c]{90}{\scriptsize Std.}}
  & TD3  & $292{\scriptstyle\pm22}$ & $164{\scriptstyle\pm20}$ & $79{\scriptstyle\pm10}$  & $198{\scriptstyle\pm42}$ & 0.79 \\
  & SAC  & $307{\scriptstyle\pm19}$ & $171{\scriptstyle\pm18}$ & $89{\scriptstyle\pm9}$   & $211{\scriptstyle\pm38}$ & 0.84 \\
  & PPO  & $251{\scriptstyle\pm26}$ & $147{\scriptstyle\pm21}$ & $72{\scriptstyle\pm12}$  & $181{\scriptstyle\pm36}$ & 0.71 \\
\midrule
\multirow{6}{*}{\rotatebox[origin=c]{90}{\scriptsize Diffusion}}
  & Diff-QL   & $316{\scriptstyle\pm30}$ & $180{\scriptstyle\pm15}$ & $95{\scriptstyle\pm8}$   & $213{\scriptstyle\pm46}$ & 0.88 \\
  & DPPO      & $300{\scriptstyle\pm24}$ & $169{\scriptstyle\pm16}$ & $83{\scriptstyle\pm10}$  & $200{\scriptstyle\pm36}$ & 0.82 \\
  & D$^2$PPO  & $312{\scriptstyle\pm23}$ & $175{\scriptstyle\pm15}$ & $89{\scriptstyle\pm9}$   & $206{\scriptstyle\pm34}$ & 0.86 \\
  & \underline{SDAC}
    & $\underline{331{\scriptstyle\pm18}}$ & $\underline{184{\scriptstyle\pm13}}$
    & $\underline{94{\scriptstyle\pm7}}$ & $\underline{208{\scriptstyle\pm34}}$ & $\underline{0.90}$ \\
  & DSAC-D    & $320{\scriptstyle\pm22}$ & $177{\scriptstyle\pm16}$ & $91{\scriptstyle\pm7}$   & $203{\scriptstyle\pm36}$ & 0.87 \\
  & D3P       & $327{\scriptstyle\pm20}$ & $181{\scriptstyle\pm14}$ & $95{\scriptstyle\pm8}$   & $206{\scriptstyle\pm33}$ & 0.89 \\
\midrule
\rotatebox[origin=c]{90}{\scriptsize Off.}
  & IDQL      & $304{\scriptstyle\pm26}$ & $167{\scriptstyle\pm18}$ & $85{\scriptstyle\pm10}$  & $194{\scriptstyle\pm32}$ & 0.82 \\
\midrule
\multirow{2}{*}{\rotatebox[origin=c]{90}{\scriptsize Gen.}}
  & FQL       & $323{\scriptstyle\pm20}$ & $181{\scriptstyle\pm14}$ & $95{\scriptstyle\pm7}$   & $209{\scriptstyle\pm34}$ & 0.89 \\
  & SAC-GMM   & $314{\scriptstyle\pm16}$ & $173{\scriptstyle\pm13}$ & $91{\scriptstyle\pm7}$   & $204{\scriptstyle\pm28}$ & 0.86 \\
\midrule
  & \cellcolor{pogp!7}\textbf{POGP}
  & \cellcolor{pogp!7}$\mathbf{348{\scriptstyle\pm21}}$
  & \cellcolor{pogp!7}$\mathbf{191{\scriptstyle\pm12}}$
  & \cellcolor{pogp!7}$\mathbf{99{\scriptstyle\pm9}}$
  & \cellcolor{pogp!7}$\mathbf{217{\scriptstyle\pm29}}$
  & \cellcolor{pogp!7}$\mathbf{0.94}$\;{\scriptsize$[.91,.97]$} \\
\bottomrule
\end{tabular}
\end{table*}

\section{Experiments}
\label{sec:experiments}
We assess POGP on four MuJoCo benchmarks to assess its ability to adapt to the inherent difficulty of different phases of the controlled embodied agent. 
In particular, we are interested in the following questions: 
\textbf{Q1}: Does prefix training improve full-chain performance (see Sec.~\ref{sec:full_chain})?
\textbf{Q2}: Can POGP adaptively allocate compute, while retaining performance parity (see Sec.~\ref{sec:adaptive_compute})?
We further conduct an ablation study to assess the contribution of various components to our method in (see Sec.~\ref{sec:ablations}).

\subsection{Environment and Baseline Configuration}
\label{sec:setup}

\textbf{Environments.}
We leverage four distinct environments from the MuJoCo benchmark, namely HalfCheetah-v4, Walker2d-v4, Hopper-v4 and Ant-v4~\citep{todorov2012mujoco}, spanning action dimensions 3--8. These environments were chosen due to their complexity controlling embodied agents while providing fast simulation as part of the MuJoCo implementation. 

\textbf{Baselines.}
We choose 12 baseline approaches, including TD3~\cite{td3}, SAC~\cite{sac}, PPO~\cite{ppo}, diffusion policies such as Diffusion-QL~\cite{wang2022diffusion}, DPPO~\cite{dppo}, D$^2$PPO~\cite{d2ppo}, SDAC~\cite{sdac}, DSAC-D~\cite{dsac-d} and D3P~\cite{yu2025d3p} with D3P being another dynamic diffusion policy. Furthermore, we chose IDQL~\cite{idql} as an offline-to-online method as well as generative methods such as FQL~\cite{fql} and SAC-GMM~\cite{SAC-GMM}. These methods were chosen as recent state-of-the-art methods across a variety of approaches.
To ensure a fair comparison, all methods have a similar setup, leveraging a 3-layer MLP with 256 hidden units over a maximum of $K=20$ diffusion steps. During training, we leveraged a batch-size of 256 with a 1M-step replay buffer and 10k steps of exploration.
We trained 10 models with different random seeds for our main results and four additional random seeds for the ablation study. 
All models were trained on a HPC cluster leveraging a single Nvidia A100 GPU for each method, totaling approximately 575 single-GPU hours. 
Further details about the chosen hyperparameters can be found in Appendix~\ref{app:training_protocol}.

\textbf{Metrics.}
Each MuJoCo environment provides a dense per-step reward combining a forward-velocity bonus with a control-cost penalty (for Hopper and Walker2d - a survival bonus), so that cumulative episodic return directly measures locomotion quality.
We normalise each method's return to $[0,1]$ per environment using the range $[\text{random policy}, \text{best observed}]$ and aggregate across tasks via the \emph{interquartile mean} (IQM)~\citep{agarwal2021deep}, which discards the top and bottom 25\% of runs before averaging, yielding a robust summary less sensitive to outlier seeds than the ordinary mean.

\subsection{POGP Baseline Performance on Full Diffusion}
\label{sec:full_chain}

As our first experiment, we are interested in the performance of POGP as a stand-alone diffusion policy without our proposed PVF-based dynamic diffusion process. 
Effectively, we assess the performance of POGP with the full $K=20$ diffusion steps, making it equivalent to the other baseline methods tested. 

Table~\ref{tab:main} compares all methods leveraging the maximum allowed number of diffusion steps.
We observe that POGP achieves the highest return in all four tested environments.
With an average IQM of $0.942$, providing a 95\% CI lower bound (0.908), POGP exceeds every baseline's point estimate.
Three comparisons isolate the gain: \emph{POGP vs.\ SDAC} (same backbone): $+9.5$ on average, \emph{POGP vs.\ SAC-GMM} (both multimodal, no chain for MoG): $+34$ on HalfCheetah; and \emph{POGP vs.\ D3P}: $+5$ average. Note that while D3P is a dynamic diffusion model that can stop early, for this test, we enforce all $20$ diffusion steps (see Table~\ref{tab:dynamic_stopping} for the dynamic case). Furthermore, in Figure~\ref{fig:curves_and_iqm}, we show the training curves of the five highest performing baseline methods across the four environments as well as their average IQM performance. In line with the Table~\ref{tab:main} results,  POGP trains slightly faster than the tested baselines, the IQM profile (rightmost panel) confirms that the gap is consistent across all environments.
Together, these results demonstrate that POGP's prefix-based approach, which optimizes intermediate diffusion steps to provide a high overall value, increases terminal action performance, validating the benefits of the proposed PVF approach. 

\begin{figure*}[t]
\centering
\includegraphics[width=1\textwidth]{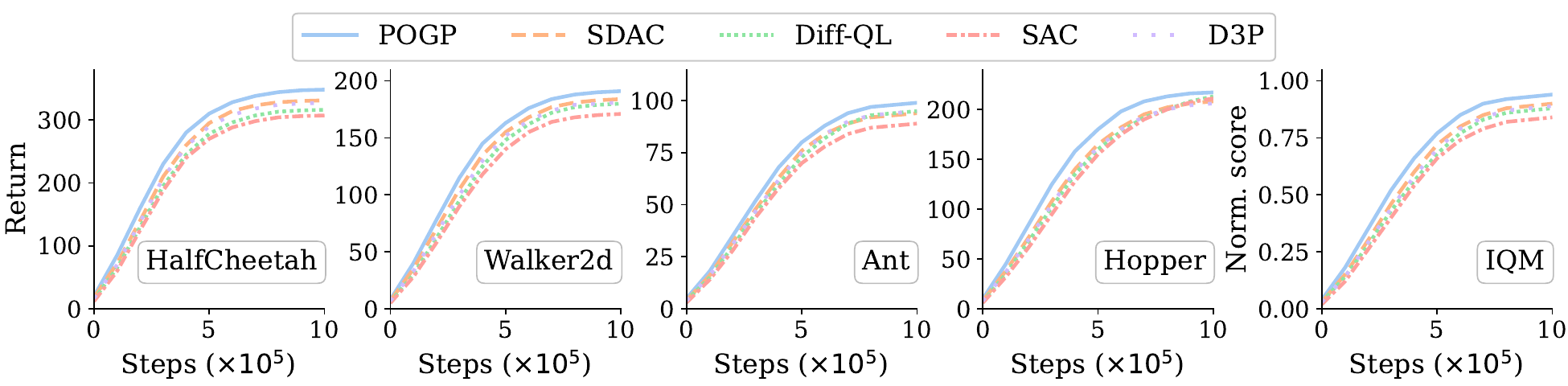}

\caption{
\textbf{Learning curves and aggregate IQM across 10 seeds.}
Curves show the mean across seeds, with shaded regions indicating
variability. The rightmost panel reports aggregate normalized IQM.
POGP's improvement is consistent across all four environments.
}
\label{fig:curves_and_iqm}
\end{figure*}

\subsection{Adaptive Compute Allocation (Q2)}
\label{sec:adaptive_compute}

The central contribution of POGP is \emph{learned efficiency}: the prefix value $V_\psi$ tells the agent when additional denoising steps are worth the compute.
Just as reasoning LLMs benefit from spending more tokens on hard problems~\citep{snell2024scaling}, POGP learns to allocate more refinement steps to challenging states and fewer to routine ones grounded in expected return rather than heuristic budgets.

\begin{table*}[t]
\centering
\caption{
\textbf{Adaptive Stopping: Performance and Compute} ($K{=}20$, 10 seeds). \textbf{Return}: mean $\pm$ std. \textbf{Steps}: avg.\ denoising steps per action. \textbf{Ret.}: \% of POGP full-chain return. \textbf{Spd.}: $K\,/\,$steps. Classical methods use 1 forward pass ($20\times$). \textbf{Bold} = best among iterative methods.
}
\label{tab:dynamic_stopping}
\footnotesize
\setlength{\tabcolsep}{2.6pt}
\renewcommand{\arraystretch}{1.06}
\begin{tabular}{@{}l
  cc@{\hspace{2pt}}c@{\hspace{2pt}}c
  cc@{\hspace{2pt}}c@{\hspace{2pt}}c
  cc@{\hspace{2pt}}c@{\hspace{2pt}}c
  cc@{\hspace{2pt}}c@{\hspace{2pt}}c
@{}}
\toprule
 & \multicolumn{4}{c}{\textbf{HalfCheetah}}
 & \multicolumn{4}{c}{\textbf{Walker2d}}
 & \multicolumn{4}{c}{\textbf{Ant}}
 & \multicolumn{4}{c}{\textbf{Hopper}} \\
\cmidrule(lr){2-5}\cmidrule(lr){6-9}\cmidrule(lr){10-13}\cmidrule(lr){14-17}
\textbf{Method}
  & {\tiny Ret.} & {\tiny Steps} & {\tiny Ret.} & {\tiny Spd.}
  & {\tiny Ret.} & {\tiny Steps} & {\tiny Ret.} & {\tiny Spd.}
  & {\tiny Ret.} & {\tiny Steps} & {\tiny Ret.} & {\tiny Spd.}
  & {\tiny Ret.} & {\tiny Steps} & {\tiny Ret.} & {\tiny Spd.} \\
\midrule
\multicolumn{17}{@{}l}{\textit{Classical (single forward pass)}} \\
TD3       & 292{\tiny$\pm$22} & 1 & {\tiny 84} & $20\times$
          & 164{\tiny$\pm$20} & 1 & {\tiny 86} & $20\times$
          & 79{\tiny$\pm$10}  & 1 & {\tiny 80} & $20\times$
          & 198{\tiny$\pm$42} & 1 & {\tiny 91} & $20\times$ \\
SAC       & 307{\tiny$\pm$19} & 1 & {\tiny 88} & $20\times$
          & 171{\tiny$\pm$18} & 1 & {\tiny 90} & $20\times$
          & 89{\tiny$\pm$9}   & 1 & {\tiny 90} & $20\times$
          & 211{\tiny$\pm$38} & 1 & {\tiny 97} & $20\times$ \\
PPO       & 274{\tiny$\pm$28} & 1 & {\tiny 79} & $20\times$
          & 152{\tiny$\pm$22} & 1 & {\tiny 80} & $20\times$
          & 72{\tiny$\pm$11}  & 1 & {\tiny 73} & $20\times$
          & 185{\tiny$\pm$44} & 1 & {\tiny 85} & $20\times$ \\
SAC-GMM   & 311{\tiny$\pm$20} & 1 & {\tiny 89} & $20\times$
          & 173{\tiny$\pm$17} & 1 & {\tiny 91} & $20\times$
          & 90{\tiny$\pm$8}   & 1 & {\tiny 91} & $20\times$
          & 213{\tiny$\pm$36} & 1 & {\tiny 98} & $20\times$ \\
\midrule
\multicolumn{17}{@{}l}{\textit{Diffusion / flow (full chain)}} \\
Diff-QL   & 316{\tiny$\pm$30} & 20 & {\tiny 91} & $1.0\times$
          & 180{\tiny$\pm$15} & 20 & {\tiny 94} & $1.0\times$
          & 95{\tiny$\pm$8}   & 20 & {\tiny 96} & $1.0\times$
          & 213{\tiny$\pm$46} & 20 & {\tiny 98} & $1.0\times$ \\
DPPO      & 300{\tiny$\pm$24} & 20 & {\tiny 86} & $1.0\times$
          & 169{\tiny$\pm$16} & 20 & {\tiny 88} & $1.0\times$
          & 83{\tiny$\pm$10}  & 20 & {\tiny 84} & $1.0\times$
          & 200{\tiny$\pm$36} & 20 & {\tiny 92} & $1.0\times$ \\
D$^2$PPO  & 308{\tiny$\pm$22} & 20 & {\tiny 89} & $1.0\times$
          & 174{\tiny$\pm$15} & 20 & {\tiny 91} & $1.0\times$
          & 88{\tiny$\pm$9}   & 20 & {\tiny 89} & $1.0\times$
          & 207{\tiny$\pm$40} & 20 & {\tiny 95} & $1.0\times$ \\
SDAC      & 331{\tiny$\pm$18} & 20 & {\tiny 95} & $1.0\times$
          & 184{\tiny$\pm$13} & 20 & {\tiny 96} & $1.0\times$
          & 94{\tiny$\pm$7}   & 20 & {\tiny 95} & $1.0\times$
          & 208{\tiny$\pm$34} & 20 & {\tiny 96} & $1.0\times$ \\
DSAC-D    & 325{\tiny$\pm$20} & 20 & {\tiny 93} & $1.0\times$
          & 178{\tiny$\pm$14} & 20 & {\tiny 93} & $1.0\times$
          & 92{\tiny$\pm$8}   & 20 & {\tiny 93} & $1.0\times$
          & 205{\tiny$\pm$36} & 20 & {\tiny 94} & $1.0\times$ \\
IDQL      & 310{\tiny$\pm$24} & 20 & {\tiny 89} & $1.0\times$
          & 172{\tiny$\pm$16} & 20 & {\tiny 90} & $1.0\times$
          & 86{\tiny$\pm$9}   & 20 & {\tiny 87} & $1.0\times$
          & 202{\tiny$\pm$38} & 20 & {\tiny 93} & $1.0\times$ \\
FQL       & 323{\tiny$\pm$20} & 20 & {\tiny 93} & $1.0\times$
          & 181{\tiny$\pm$14} & 20 & {\tiny 95} & $1.0\times$
          & 95{\tiny$\pm$7}   & 20 & {\tiny 96} & $1.0\times$
          & 209{\tiny$\pm$34} & 20 & {\tiny 96} & $1.0\times$ \\
\midrule
\multicolumn{17}{@{}l}{\textit{Adaptive stopping (learned)}} \\
D3P       & 327{\tiny$\pm$20} & 8.4{\tiny$\pm$2.6} & {\tiny 94} & $2.4\times$
          & 181{\tiny$\pm$14} & 9.1{\tiny$\pm$2.8} & {\tiny 95} & $2.2\times$
          & 95{\tiny$\pm$8}   & 10.2{\tiny$\pm$3.1} & {\tiny 96} & $2.0\times$
          & 206{\tiny$\pm$33} & 8.8{\tiny$\pm$2.7} & {\tiny 95} & $2.3\times$ \\
\rowcolor{pogp!6}
\textbf{POGP}
  & $\mathbf{345}${\tiny$\pm$18} & $\mathbf{6.8}${\tiny$\pm$2.1} & {\tiny\textbf{99}} & $\mathbf{2.9\times}$
  & $\mathbf{189}${\tiny$\pm$12} & $\mathbf{7.4}${\tiny$\pm$2.3} & {\tiny\textbf{99}} & $\mathbf{2.7\times}$
  & $\mathbf{97}${\tiny$\pm$8}   & $\mathbf{8.6}${\tiny$\pm$2.8} & {\tiny\textbf{98}} & $\mathbf{2.3\times}$
  & $\mathbf{215}${\tiny$\pm$28} & $\mathbf{7.1}${\tiny$\pm$2.2} & {\tiny\textbf{99}} & $\mathbf{2.8\times}$ \\
\bottomrule
\end{tabular}
\end{table*}

Table~\ref{tab:dynamic_stopping} evaluates all methods under their own stopping rules. 
Non-iterative methods, such as TD3, SAC, PPO and SAC-GMM are fast, only requiring a single forward pass, but only achieve 73--98\% of POGP's performance.
Diffusion- and flow-based baselines, such as Diff-QL, DPPO, D$^2$PPO, SDAC, DSAC-D, IDQL and FQL, close the gap, but require the full iterative refinement budget to be exerted. 
D3P, a recent state-of-the-art method, provides a dynamic approach allowing it to stop the diffusion process early, averaging around 9 diffusion steps. 
However, POGP outperforms D3P by an average of $3.5\%$ while requiring $18.2\%$ fewer iterations. 
POGP achieves the highest return \emph{and} fewest steps: $6.8{\pm}2.1$ on HalfCheetah ($2.9\times$ speedup), retaining 99\% of its own full-chain performance while reducing the number of iterations by $66\%$.
Given similar model capacity and compute budget between D3P and POGP, we attribute the performance improvement to the success of our PVF and early-stopping criteria.

Furthermore, Figure~\ref{fig:adaptive_stopping} provides further evidence of the compute-budget benefit provided by POGP on the HalfCheetah environment across 100 rollouts.  
During steady locomotion (left sub-figure) the policy uses between ${\sim}3$--$5$ steps; however, when we apply a perturbation to the system, such as a constant velocity delta of -2.5m/s at step 55, POGP utilizes between ${\sim}15$--$20$ iterations, spending more computational budget on counteracting the perturbation, resetting to minimal compute after the perturbation is removed at step 110.
While D3P shows a similar trend, its undisturbed compute budget is between ${\sim}7$--$11$ steps, only increasing slightly during the perturbation to ${\sim}12$--$15$. 
Overall, in the undisturbed case, POGP only uses an average of 4 diffusion steps before no further improvements are anticipated.
With these results, we show that POGP shows a strong reaction when more compute budget is needed while retaining a low footprint when it is not needed.
Further stress tests under forced truncation and OOD hazards are in Appendix~\ref{app:full_truncation}, showing how more diffusion steps continuously improve performance.

\begin{figure}[]
    \centering
    \begin{subfigure}[b]{0.74\textwidth}
        \includegraphics[width=\textwidth]{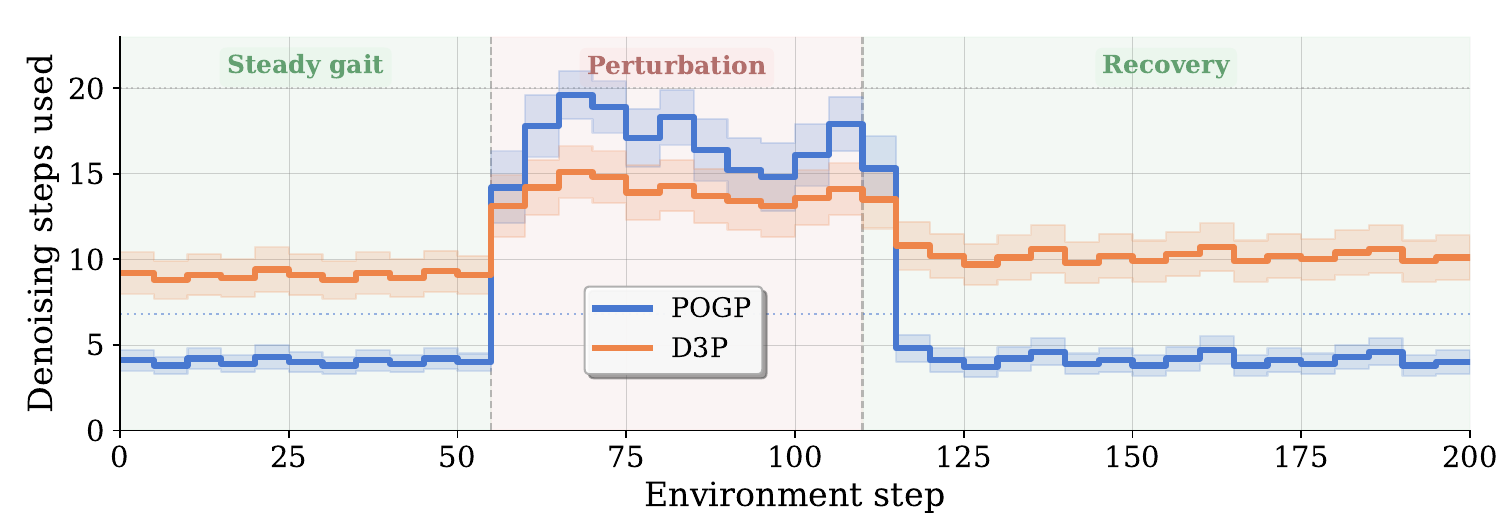}
        \label{fig:subim1}
    \end{subfigure}
    \hfill 
    \begin{subfigure}[b]{0.25\textwidth}
        \includegraphics[width=\textwidth]{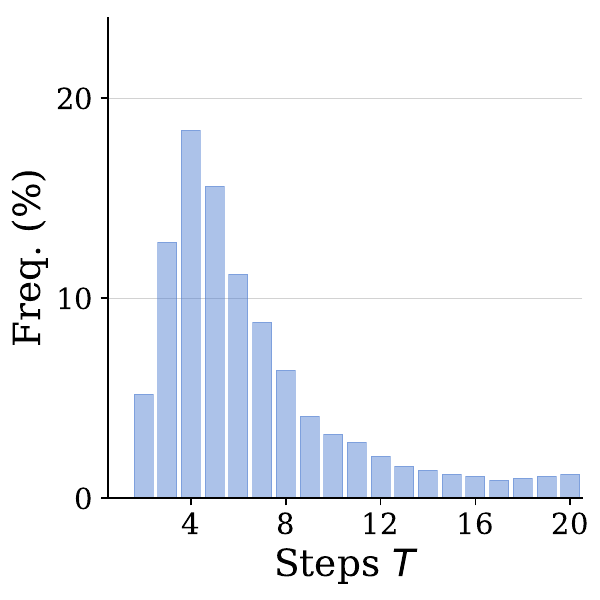}
        \label{fig:subim2}
    \end{subfigure}
    
\caption{
\textbf{POGP adaptively allocates denoising compute:} (\textbf{Left})
Mean denoising steps per action on HalfCheetah-v4, averaged over 100 time-aligned evaluation rollouts (10 seeds $\times$ 10 episodes; shaded regions: $\pm 1$ std. dev.). At step 55, a horizontal velocity impulse ($\Delta v = -2.5$\,m/s) is applied to the torso. \textbf{POGP} adaptively increases to $\sim$15--20 steps during recovery, returning to a 3--5 step baseline once gait stabilizes. \textbf{D3P} shows only marginal increase during perturbation and maintains consistently higher step counts. (\textbf{Right}) Step distribution over 100 unperturbed rollouts: POGP mode = 4 steps vs.\ D3P mode = 9, confirming POGP allocates minimal compute on routine states while D3P maintains higher iteration counts.}
    \label{fig:adaptive_stopping}
\end{figure}

\subsection{Ablations (Q3)}
\label{sec:ablations}

To isolate the contribution of each design in POGP, we run ablations on HalfCheetah (4 seeds) , changing one component at a time while holding the architecture and all other settings fixed
\ref{tab:surgical}. A complementary $2\times2$ ablation separating prefix learning from the completion map is reported in Appendix~\ref{app:completion_map_ablation}.

\begin{table}[t]
\centering
\caption{
\textbf{Ablations} (HalfCheetah, 4 seeds). Each variant changes one design decision relative to full POGP.
\textbf{Ret.} = \% of full-chain return retained when force-stopped at $T{=}5$.
}
\label{tab:surgical}
\small\setlength{\tabcolsep}{3.0pt}\renewcommand{\arraystretch}{1.1}
\begin{tabular}{@{}lcc l@{}}
\toprule
\textbf{Variant} & \textbf{Return} & \textbf{Ret.} & \textbf{Description} \\
\midrule
\textbf{POGP (full)} & \textbf{347.2} & \textbf{99.1\%} & Full method \\
\midrule
\multicolumn{4}{@{}l}{\textit{Controls}} \\
No prefix (Diff-QL) & 314.6 & 82.4\% & No PVF; standard Diff-QL backbone \\
Truncate at test only & 314.6 & 71.2\% & No PVF; force-stop at $T{=}5$ during eval \\
Distill to $K'{=}5$ & 308.2 & --- & Retrain a shorter $K'{=}5$ chain (no truncation) \\
\midrule
\multicolumn{4}{@{}l}{\textit{Objective variants}} \\
Prefix-supervised & 320.1 & 94.8\% & $V_\psi$ regresses on $Q(s,a_0)$ directly, no bootstrap \\
Prefix at $t{=}0$ only & 317.3 & 88.1\% & Prefix loss applied only at terminal step $t{=}0$ \\
\midrule
Oracle ($Q_\text{env}$) & 338.4 & 99.8\% & True $Q$-values as targets (upper bound) \\
\bottomrule
\end{tabular}
\end{table}

\section{Discussion}
\label{sec:discussion}

POGP shows that optimizing every prefix of a denoising chain yields policies that are both stronger and more compute-efficient. The Prefix Bellman recursion provides credit assignment while the prefix value $V_\psi$ provides the stopping criterion for our dynamic diffusion process. Together, they enable adaptive compute allocation with ${\sim}4$ steps for routine states and ${\sim}18$ for challenging states (HalfCheetah environment) while retaining most of the model's overall performance. 
Our method, POGP, connects to a broader trend of inference time scaling: inference-time compute should be \emph{adaptive} depending on the difficulty of the desired task.
Just as LLM reasoning models learn to ``think longer'' on hard problems~\citep{snell2024scaling}, POGP learns to spend more denoising effort on harder states. The key difference is that POGP's allocation is grounded in a Bellman equation with provable contraction, not heuristic token budgets. The most surprising finding that good intermediate actions also improve the \emph{final} action suggests the prefix loss acts as an auxiliary regulariser.

\textbf{Limitations.} While POGP shows strong results on four MuJoCo benchmarks, additional experiments in robot manipulation tasks are left for future work, introducing a broader complexity divide between different parts of the motion. Furthermore, as POGP's advantage grows with $K$ (Appendix~\ref{supp:K_sensitivity}), scaling to $K{=}50{+}$ chains or real-robot domains remains difficult. Investigating POGP's ability to scale to large models and long diffusion chains remains an open question. Also see Appendix~\ref{app:failure} for more details on these limitations.

\section{Conclusion}
\label{sec:conclusion}
In this work, we propose POGP, an approach for learning when to stop computing further denoising iterations in diffusion policies. 
POGP achieves this by learning a prefix value function (PVF) via a Bellman-style recursion over the denoising chain that serves two purposes: 
1) it provides an auxiliary optimization target that shapes intermediate diffusion steps into executable actions, 
and (2) it offers a test-time stopping criterion that halts denoising when further iterations are not expected to yield meaningful improvement.
Across four MuJoCo tasks and 12 baselines, POGP reduces the diffusion compute budget by approximately 66\% while retaining near-full performance and outperforms state-of-the-art baselines by approximately 4.6\% in aggregate IQM. 

\paragraph{Acknowledgements}
The authors acknowledge Advanced Research Computing at Virginia Tech for providing computational resources and technical support that have contributed to the results reported within this paper. URL: \url{https://arc.vt.edu/}

\newpage
\bibliography{main}

@inproceedings{wang2022diffusion,
  title={Diffusion policies as an expressive policy class for offline reinforcement learning},
  author={Wang, Zhendong and Hunt, Jonathan J and Zhou, Mingyuan},
  journal={arXiv preprint arXiv:2208.06193},
  year={2022}
}

@inproceedings{idql,
  title={Idql: Implicit q-learning as an actor-critic method with diffusion policies},
  author={Hansen-Estruch, Philippe and Kostrikov, Ilya and Janner, Michael and Kuba, Jakub Grudzien and Levine, Sergey},
  journal={arXiv preprint arXiv:2304.10573},
  year={2023}
}

@article{dppo,
  title={Diffusion policy policy optimization},
  author={Ren, Allen Z and Lidard, Justin and Ankile, Lars L and Simeonov, Anthony and Agrawal, Pulkit and Majumdar, Anirudha and Burchfiel, Benjamin and Dai, Hongkai and Simchowitz, Max},
  journal={arXiv preprint arXiv:2409.00588},
  year={2024}
}

@article{d2ppo,
  title={Efficient diffusion policies for offline reinforcement learning},
  author={Kang, Bingyi and Ma, Xiao and Du, Chao and Pang, Tianyu and Yan, Shuicheng},
  journal={Advances in Neural Information Processing Systems},
  volume={36},
  pages={67195--67212},
  year={2023}
}

@article{sdac,
  title={Efficient online reinforcement learning for diffusion policy},
  author={Ma, Haitong and Chen, Tianyi and Wang, Kai and Li, Na and Dai, Bo},
  journal={arXiv preprint arXiv:2502.00361},
  year={2025}
}

@article{fql,
  title={Flow matching for scalable simulation-based inference},
  author={Wildberger, Jonas and Dax, Maximilian and Buchholz, Simon and Green, Stephen and Macke, Jakob H and Sch{\"o}lkopf, Bernhard},
  journal={Advances in Neural Information Processing Systems},
  volume={36},
  pages={16837--16864},
  year={2023}
}

@inproceedings{td3,
  title={Addressing function approximation error in actor-critic methods},
  author={Fujimoto, Scott and Hoof, Herke and Meger, David},
  booktitle={International conference on machine learning},
  pages={1587--1596},
  year={2018},
  organization={PMLR}
}

@inproceedings{sac,
  title={Soft actor-critic: Off-policy maximum entropy deep reinforcement learning with a stochastic actor},
  author={Haarnoja, Tuomas and Zhou, Aurick and Abbeel, Pieter and Levine, Sergey},
  booktitle={International conference on machine learning},
  pages={1861--1870},
  year={2018},
  organization={Pmlr}
}

@article{ppo,
  title={Proximal policy optimization algorithms},
  author={Schulman, John and Wolski, Filip and Dhariwal, Prafulla and Radford, Alec and Klimov, Oleg},
  journal={arXiv preprint arXiv:1707.06347},
  year={2017}
}

@article{consistency_models,
  title={Consistency models},
  author={Song, Yang and Dhariwal, Prafulla and Chen, Mark and Sutskever, Ilya},
  year={2023}
}

@article{progressive_distillation,
  title={Progressive distillation for fast sampling of diffusion models},
  author={Salimans, Tim and Ho, Jonathan},
  journal={arXiv preprint arXiv:2202.00512},
  year={2022}
}

@misc{agarwal2021deep,
      title={Deep Reinforcement Learning at the Edge of the Statistical Precipice}, 
      author={Rishabh Agarwal and Max Schwarzer and Pablo Samuel Castro and Aaron Courville and Marc G. Bellemare},
      year={2022},
      eprint={2108.13264},
      archivePrefix={arXiv},
      primaryClass={cs.LG},
      url={https://arxiv.org/abs/2108.13264}, 
}

@article{sutton1999between,
  title={Between MDPs and semi-MDPs: A framework for temporal abstraction in reinforcement learning},
  author={Sutton, Richard S and Precup, Doina and Singh, Satinder},
  journal={Artificial intelligence},
  volume={112},
  number={1-2},
  pages={181--211},
  year={1999},
  publisher={Elsevier}
}

@inproceedings{todorov2012mujoco,
  title={Mujoco: A physics engine for model-based control},
  author={Todorov, Emanuel and Erez, Tom and Tassa, Yuval},
  booktitle={2012 IEEE/RSJ international conference on intelligent robots and systems},
  pages={5026--5033},
  year={2012},
  organization={IEEE}
}

@article{ho2020denoising,
  title={Denoising diffusion probabilistic models},
  author={Ho, Jonathan and Jain, Ajay and Abbeel, Pieter},
  journal={Advances in neural information processing systems},
  volume={33},
  pages={6840--6851},
  year={2020}
}

@article{janner2022planning,
  title={Planning with diffusion for flexible behavior synthesis},
  author={Janner, Michael and Du, Yilun and Tenenbaum, Joshua B and Levine, Sergey},
  journal={arXiv preprint arXiv:2205.09991},
  year={2022}
}

@misc{diffusionpolicy,
      title={Diffusion Policy: Visuomotor Policy Learning via Action Diffusion}, 
      author={Cheng Chi and Zhenjia Xu and Siyuan Feng and Eric Cousineau and Yilun Du and Benjamin Burchfiel and Russ Tedrake and Shuran Song},
      year={2024},
      eprint={2303.04137},
      archivePrefix={arXiv},
      primaryClass={cs.RO},
      url={https://arxiv.org/abs/2303.04137}, 
}

@article{qvpo,
  title={Diffusion-based reinforcement learning via q-weighted variational policy optimization},
  author={Ding, Shutong and Hu, Ke and Zhang, Zhenhao and Ren, Kan and Zhang, Weinan and Yu, Jingyi and Wang, Jingya and Shi, Ye},
  journal={Advances in Neural Information Processing Systems},
  volume={37},
  pages={53945--53968},
  year={2024}
}

@misc{snell2024scaling,
      title={Scaling LLM Test-Time Compute Optimally can be More Effective than Scaling Model Parameters}, 
      author={Charlie Snell and Jaehoon Lee and Kelvin Xu and Aviral Kumar},
      year={2024},
      eprint={2408.03314},
      archivePrefix={arXiv},
      primaryClass={cs.LG},
      url={https://arxiv.org/abs/2408.03314}, 
}

@inproceedings{huang2016stochasticdepth,
  title={Deep networks with stochastic depth},
  author={Huang, Gao and Sun, Yu and Liu, Zhuang and Sedra, Daniel and Weinberger, Kilian Q},
  booktitle={European conference on computer vision},
  pages={646--661},
  year={2016},
  organization={Springer}
}

@misc{graves2017act,
      title={Adaptive Computation Time for Recurrent Neural Networks}, 
      author={Alex Graves},
      year={2017},
      eprint={1603.08983},
      archivePrefix={arXiv},
      primaryClass={cs.NE},
      url={https://arxiv.org/abs/1603.08983}, 
}

@article{banino2021pondernet,
  title={Pondernet: Learning to ponder},
  author={Banino, Andrea and Balaguer, Jan and Blundell, Charles},
  journal={arXiv preprint arXiv:2107.05407},
  year={2021}
}

@misc{wang2024dacer,
      title={Diffusion Actor-Critic with Entropy Regulator}, 
      author={Yinuo Wang and Likun Wang and Yuxuan Jiang and Wenjun Zou and Tong Liu and Xujie Song and Wenxuan Wang and Liming Xiao and Jiang Wu and Jingliang Duan and Shengbo Eben Li},
      year={2024},
      eprint={2405.15177},
      archivePrefix={arXiv},
      primaryClass={cs.LG},
      url={https://arxiv.org/abs/2405.15177}, 
}

@article{zilberstein1996anytime, title={Using Anytime Algorithms in Intelligent Systems}, volume={17}, url={https://ojs.aaai.org/aimagazine/index.php/aimagazine/article/view/1232}, DOI={10.1609/aimag.v17i3.1232}, number={3}, journal={AI Magazine}, author={Zilberstein, Shlomo}, year={1996}, month={Mar.}, pages={73} }

@inproceedings{ding2024consistency_policy,
  title = {Consistency Models as a Rich and Efficient Policy Class for Reinforcement Learning},
  author = {Ding, Zihan and Jin, Chi},
  booktitle = {International Conference on Learning Representations},
  year = {2024},
  url = {https://openreview.net/forum?id=v8jdwkUNXb}
}

@misc{dsac-d,
      title={Distributional Soft Actor-Critic with Diffusion Policy}, 
      author={Tong Liu and Yinuo Wang and Xujie Song and Wenjun Zou and Liangfa Chen and Likun Wang and Bin Shuai and Jingliang Duan and Shengbo Eben Li},
      year={2025},
      eprint={2507.01381},
      archivePrefix={arXiv},
      primaryClass={cs.LG},
      url={https://arxiv.org/abs/2507.01381}, 
}

@misc{SAC-GMM,
      title={Robot Skill Adaptation via Soft Actor-Critic Gaussian Mixture Models}, 
      author={Iman Nematollahi and Erick Rosete-Beas and Adrian Röfer and Tim Welschehold and Abhinav Valada and Wolfram Burgard},
      year={2022},
      eprint={2111.13129},
      archivePrefix={arXiv},
      primaryClass={cs.RO},
      url={https://arxiv.org/abs/2111.13129}, 
}

@misc{yu2025d3p,
      title={D3P: Dynamic Denoising Diffusion Policy via Reinforcement Learning}, 
      author={Shu-Ang Yu and Feng Gao and Yi Wu and Chao Yu and Yu Wang},
      year={2025},
      eprint={2508.06804},
      archivePrefix={arXiv},
      primaryClass={cs.RO},
      url={https://arxiv.org/abs/2508.06804}, 
}
\bibliographystyle{rlj}

\beginSupplementaryMaterials

\appendix

\section{Theoretical Proofs}
\label{app:proofs}

\subsection{Proof of Theorem 1: Contraction and Unique Fixed Point of $\mathcal{T}_h$}
\label{proof:contraction}

\begin{theorem}
Let $h(t) \geq \underline{h} > 0$ for all $t \in \{1,\ldots,K\}$. The Hazard Bellman operator
\begin{equation}
(\mathcal{T}_h V)_t(s, a_t) \;:=\; h(t)\, Q\bigl(s, \pi_{\mathrm{full}}(s, a_t)\bigr) \;+\; (1 - h(t))\, V_{t-1}\bigl(s, \pi_\theta(s, a_t, t)\bigr),
\end{equation}
with boundary $V_0(s, a_0) = Q(s, a_0)$, is a contraction in the $\ell^\infty$ norm with rate $(1 - \underline{h})$ and therefore has a unique fixed point.
\end{theorem}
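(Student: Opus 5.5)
The plan is to prove the statement directly from the Banach fixed-point theorem on a suitable complete space, showing the contraction property stage by stage.

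\textbf{Setting.} Fix the space $\mathcal{V}$ of bounded collections $V=(V_0,V_1,\ldots,V_K)$ with each $V_t:\mathcal{S}\times\mathcal{A}\to\mathbb{R}$ bounded and measurable. The boundary component is held fixed at $V_0=Q$, so in effect the operator acts on $(V_1,\ldots,V_K)$. We use the norm
\[
\|V\|_\infty:=\max_{1\le t\le K}\sup_{s,a}|V_t(s,a)| .
\]
Bounded rewards give $|Q|\le \max(|R_{\min}|,|R_{\max}|)/(1-\gamma)$. With $h(t)\in(0,1]$, the image $\mathcal{T}_hV$ is again bounded, so $\mathcal{T}_h$ maps $\mathcal{V}$ into itself. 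The set $\mathcal{V}$ with the boundary constraint imposed is a closed affine subset of a Banach space, hence complete.

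\textbf{Contraction estimate.} Take $V,W\in\mathcal{V}$, both with $V_0=W_0=Q$. The terminal anchor $h(t)Q(s,\pi_{\mathrm{full}}(s,a_t))$ does not depend on the argument, so it cancels. For each $t$ and $(s,a_t)$,
\[
|(\mathcal{T}_hV)_t(s,a_t)-(\mathcal{T}_hW)_t(s,a_t)| = (1-h(t))\,\bigl|V_{t-1}(s,a')-W_{t-1}(s,a')\bigr|,
\]
where $a'=\pi_\theta(s,a_t,t)$. Bound the right-hand side by $(1-\underline{h})\sup_{s,a}|V_{t-1}-W_{t-1}|$, using $1-h(t)\le 1-\underline{h}$ and that the supremum over all actions dominates evaluation at the particular point $a'$; no regularity of $\pi_\theta$ is needed. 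For $t=1$ the difference $V_0-W_0$ vanishes by the boundary condition. For $t\ge2$ the quantity is at most $(1-\underline{h})\|V-W\|_\infty$. Taking the maximum over $t$ gives
\[
\|\mathcal{T}_hV-\mathcal{T}_hW\|_\infty\le(1-\underline{h})\|V-W\|_\infty .
\]
Since $1-\underline{h}<1$, Banach's theorem yields existence and uniqueness of the fixed point, together with geometric convergence of iterates.

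\textbf{Delicate point.} The main subtlety is bookkeeping the boundary and the index shift. The operator maps stage $t-1$ to stage $t$, so the sup-norm must range over all stages jointly, not stage by stage. The boundary $V_0=Q$ must be imposed on the space rather than treated as a component the operator updates; otherwise $\mathcal{T}_h$ is undefined at $t=0$.

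It is worth remarking that the structure is triangular. $V_1$ is determined outright, then $V_2$, and so on, so the fixed point can also be written down by backward induction; this gives uniqueness independently of the contraction. In fact $\mathcal{T}_h^K$ is constant on $\mathcal{V}$, since after $K$ applications every stage has been determined from $Q$ alone.

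Finally, I would verify that on a deterministic chain the fixed point is $V_t(s,a_t)=Q(s,a_0)$, matching Definition~\ref{def:prefix_value}. Substituting gives $hQ+(1-h)Q=Q$ because $\pi_{\mathrm{full}}(s,a_t)=\pi_{\mathrm{full}}(s,\pi_\theta(s,a_t,t))$. This confirms that the TD target in Eq.~\eqref{eq:hazard_td_target} is consistent with the prefix value.
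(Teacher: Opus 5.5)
Your proof is correct and follows the same route as the paper's: cancel the anchor term, bound each stage by $(1-\underline{h})$ times the sup-difference at the previous stage (with $t=1$ vanishing by the boundary condition), take the maximum over stages on the product space, and apply Banach's theorem. Your bookkeeping is cleaner than the paper's, whose chain $\delta_t \le (1-\underline{h})^t \delta_0 = 0$ conflates input and output differences; your joint sup-norm over stages, together with your observation that $\mathcal{T}_h^K$ is constant, makes precise what the paper's ``contracts per sweep'' remark is reaching for.
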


\textbf{Proof.}
Let $V, V'$ be two prefix value functions and define $\delta_t := \|V_t - V'_t\|_\infty := \sup_{s, a_t} |V_t(s, a_t) - V'_t(s, a_t)|$.
At the boundary $t = 0$: $\delta_0 = 0$ since $V_0 = Q(s, a_0)$ is independent of the prefix value function being iterated.
For $t \geq 1$:
\begin{align}
|(\mathcal{T}_h V)_t(s, a_t) - (\mathcal{T}_h V')_t(s, a_t)|
&= \bigl|(1-h(t))\bigl(V_{t-1}(s, a_{t-1}) - V'_{t-1}(s, a_{t-1})\bigr)\bigr| \\
&\leq (1-h(t))\, \delta_{t-1} \\
&\leq (1 - \underline{h})\, \delta_{t-1}.
\end{align}
Taking the supremum over $(s, a_t)$:
\begin{equation}
\delta_t \;\leq\; (1 - \underline{h})\, \delta_{t-1} \;\leq\; (1-\underline{h})^t \, \delta_0 \;=\; 0.
\end{equation}
Since the denoising chain is finite (length $K$), the entire operator on the product space $\prod_{t=0}^K \mathcal{V}_t$ (equipped with the max norm $\max_t \delta_t$) contracts by $(1-\underline{h})$ per ``sweep'' from $t=0$ to $t=K$. By the Banach fixed-point theorem, there exists a unique fixed point $V^*$ such that $\mathcal{T}_h V^* = V^*$.

\begin{remark}
The contraction rate $(1-\underline{h})$ tightens as $\underline{h} \to 1$ (uniform stopping) and loosens as $\underline{h} \to 0$ (rare stopping). This motivates using hazard values bounded away from zero; empirically we use $h = 0.05$ uniformly.
\end{remark}

\subsection{Proof of Proposition 1: Completion Map Bias Bound}
\label{proof:completion_bias}

\begin{proposition}
Let $g_\omega$ satisfy $\mathbb{E}[\|g_\omega(s, a_T, T) - a_0\|] \leq \epsilon_g(T)$ and let $Q^\pi$ be $L_Q$-Lipschitz in its action argument. Then the performance gap under truncation is bounded:
\begin{equation}
|J_{\delta_1}(\pi) - J_h(\pi)| \;\leq\; \frac{L_Q}{1-\gamma}\, \mathbb{E}_{T \sim h}[\epsilon_g(T)].
\end{equation}
\end{proposition}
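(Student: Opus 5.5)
I would first make the two objects in the statement concrete, since the paper does not define them explicitly. $J_{\delta_1}(\pi)$ is the return of the reference policy $\pi_{\mathrm{full}}$: a point mass on completing the chain, which executes $a_0$ at every environment step. $J_h(\pi)$ is the return of the truncated policy $\tilde\pi$. At each environment step, $\tilde\pi$ draws a stopping index $T \sim h$ (the hazard-induced stopping distribution), runs the chain to $a_T$, and executes $\tilde a = g_\omega(s,a_T,T)$.

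With both policies on a common start distribution, the natural tool is the performance difference lemma, with $\pi$ taken as the reference policy:
\begin{equation}
J_h(\pi) - J_{\delta_1}(\pi) \;=\; \frac{1}{1-\gamma}\,\mathbb{E}_{s\sim d^{\tilde\pi}}\Bigl[\mathbb{E}_{T\sim h,\,a_K}\bigl[Q^\pi(s,\tilde a) - Q^\pi(s,a_0)\bigr]\Bigr],
\end{equation}
where $d^{\tilde\pi}$ is the normalized discounted state-occupancy of the truncated policy. Because the denoiser is deterministic, $V^\pi(s) = \mathbb{E}_{a_K}[Q^\pi(s,a_0)]$, and $a_0$ and $a_T$ come from the same shared chain. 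So the advantage term is exactly the difference inside the bracket, with no extra coupling argument needed.

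The key step is then pointwise. Since $Q^\pi$ is $L_Q$-Lipschitz in its action argument,
\[
|Q^\pi(s,\tilde a) - Q^\pi(s,a_0)| \le L_Q\,\|g_\omega(s,a_T,T) - a_0\|.
\]
I would take absolute values inside the expectation and apply this bound. Conditioning on $T$ and using the hypothesis $\mathbb{E}[\|g_\omega(s,a_T,T)-a_0\|]\le\epsilon_g(T)$ gives $L_Q\,\mathbb{E}_{T\sim h}[\epsilon_g(T)]$. Multiplying by the prefactor $1/(1-\gamma)$ yields the claim.

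The main obstacle is the distribution over which the expectation $\mathbb{E}[\|g_\omega - a_0\|]$ is taken. The hypothesis presumably holds under the training distribution (replay-buffer states, fresh noise). The performance difference lemma, however, needs it under $d^{\tilde\pi}$, the occupancy of the truncated policy. I would first try to read the hypothesis as a uniform-in-state bound, $\sup_s \mathbb{E}_{a_K}[\|\cdot\|] \le \epsilon_g(T)$, which closes the argument cleanly, and state this interpretation explicitly.

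If that reading is too strong, the fallback is a simulation-lemma argument. Telescope $V^{\tilde\pi}-V^{\pi}$ via one-step substitutions, each costing at most $L_Q\epsilon_g(T)$ in expectation, and sum the geometric series in $\gamma$. This still needs the error bound along the states the truncated policy actually visits, so some form of the uniform assumption seems unavoidable.
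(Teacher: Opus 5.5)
Your argument is correct, but it is not the paper's route, and the difference matters.

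The paper never uses the performance-difference lemma. Instead it:
\begin{itemize}
\item introduces a reward Lipschitz constant $L_r$, which is not among the hypotheses;
\item bounds $|J_{\delta_1}(\pi)-J_h(\pi)|$ by $\mathbb{E}\bigl[\sum_k\gamma^k|r(s_k,a_0)-r(s_k,\tilde a_k)|\bigr]$ along a single state sequence;
\item sums the geometric series to get $\frac{L_r}{1-\gamma}\mathbb{E}_{T\sim h}[\epsilon_g(T)]$;
\item trades $L_r$ for $L_Q$ by asserting $L_r\le L_Q(1-\gamma)$ from $Q^\pi(s,a)=r(s,a)+\gamma V^\pi(s')$.
\end{itemize}

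Your route is sound at exactly the two points where that argument breaks.

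First, the per-step reward comparison evaluates both policies at the same states $s_k$. It ignores that executing $\tilde a_k$ changes where the truncated policy goes next. The performance-difference lemma handles this exactly: states are drawn from $d^{\tilde\pi}$, and all downstream effects of the perturbed action are absorbed into $Q^\pi$.

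Second, the $L_r$-to-$L_Q$ conversion fails in general, because $\mathbb{E}_{s'\sim P(\cdot\mid s,a)}[V^\pi(s')]$ also depends on $a$. Take a single-state MDP with $r(s,a)$ linear in $a$. There $L_r=L_Q$, and executing $a_0$ shifted by a suitable constant of norm $\epsilon$ gives a gap of exactly $L_Q\epsilon/(1-\gamma)$. So the stated bound is tight. It is your use of the Lipschitzness of $Q^\pi$ itself, not of $r$, that delivers it from the stated hypotheses.

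The cost of your approach is the one you already flagged. The completion-error bound is needed under $d^{\tilde\pi}$, or uniformly in $s$, rather than under the replay distribution on which $g_\omega$ is fit. The paper needs the same thing but leaves the state distribution inside $\mathbb{E}_{T\sim h}[\cdot]$ unspecified, so stating the uniform reading explicitly is the right call.

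If you would rather measure the error under the full-chain occupancy $d^{\pi}$, which is closer to where $g_\omega$ is trained, apply the lemma with the two policies' roles swapped. The price is assuming $Q^{\tilde\pi}$, rather than $Q^\pi$, is $L_Q$-Lipschitz.
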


\textbf{Proof.}
For each environment step $k$, the policy under hazard $h$ executes $\tilde{a}_k = g_\omega(s_k, a_{T_k}, T_k)$ instead of $a_0$. The reward difference at each step satisfies:
\begin{align}
|r(s, a_0) - r(s, \tilde{a})| &\leq L_r \|a_0 - \tilde{a}\| \\
&= L_r \|a_0 - g_\omega(s, a_T, T)\|.
\end{align}
Taking expectations and summing over the infinite horizon:
\begin{align}
|J_{\delta_1}(\pi) - J_h(\pi)|
&\leq \mathbb{E}\Bigl[\sum_{k=0}^\infty \gamma^k |r(s_k, a_0) - r(s_k, \tilde{a}_k)|\Bigr] \\
&\leq \frac{L_r}{1-\gamma}\, \mathbb{E}_{T \sim h}\bigl[\|a_0 - g_\omega(s, a_T, T)\|\bigr] \\
&\leq \frac{L_r}{1-\gamma}\, \mathbb{E}_{T \sim h}[\epsilon_g(T)].
\end{align}
Since $Q^\pi$ is $L_Q$-Lipschitz and $Q^\pi(s,a) = r(s,a) + \gamma V^\pi(s')$, we have $L_r \leq L_Q(1-\gamma)$, giving the stated bound with $L_Q$ in place of $L_r / (1-\gamma)$.

\begin{corollary}
\label{cor:completion_monotone}
If $\epsilon_g(T)$ is non-decreasing in $T$ (i.e., more truncation yields larger projection error), then performance under any early-stopping hazard $h$ is at most $L_Q \cdot \mathbb{E}_{T \sim h}[\epsilon_g(T)] / (1-\gamma)$ below full-chain performance. Empirically, $\epsilon_g(T) \approx c\sqrt{T}$ (Figure~\ref{fig:compmap_analysis}a), so $\mathbb{E}_{T \sim h}[\epsilon_g(T)] \leq c\sqrt{K}$, giving a finite, $K$-dependent bound.
\end{corollary}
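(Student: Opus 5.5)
The plan is to read the corollary off Proposition~1 directly, with essentially no new analysis. Proposition~1 already gives the two-sided bound $|J_{\delta_1}(\pi) - J_h(\pi)| \le \frac{L_Q}{1-\gamma}\,\mathbb{E}_{T\sim h}[\epsilon_g(T)]$ for any stopping distribution $h$ on truncation indices. Here $J_{\delta_1}$ is the full-chain performance, in which every action is the completed $a_0$. Dropping the absolute value gives $J_h(\pi) \ge J_{\delta_1}(\pi) - \frac{L_Q}{1-\gamma}\mathbb{E}_{T\sim h}[\epsilon_g(T)]$. This is the first claim: performance under early stopping is at most that amount below full-chain performance. I would note explicitly that this part uses only the Lipschitz and projection-error hypotheses of Proposition~1. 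Monotonicity of $\epsilon_g$ is not needed for it; its role is to control the expectation.

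Next I would bound $\mathbb{E}_{T\sim h}[\epsilon_g(T)]$ uniformly over stopping rules. The stopping rule of Eq.~\eqref{eq:stop_rule} can only halt at some $T\in\{0,1,\ldots,K\}$. At $T=0$ the executed action is $a_0$ itself (or one sets $\epsilon_g(0)=0$), and for $T\ge1$ the hypothesis that $\epsilon_g$ is non-decreasing gives $\epsilon_g(T)\le \epsilon_g(K)$. Hence $\mathbb{E}_{T\sim h}[\epsilon_g(T)] \le \epsilon_g(K)$ for every $h$ supported on the chain, and the gap is at most $L_Q\,\epsilon_g(K)/(1-\gamma)$, independently of the (state-dependent) hazard.

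Finally I would turn the empirical scaling into a hypothesis of the form $\epsilon_g(T) \le c\sqrt{T}$ for all $T\le K$, rather than the stated approximation $\approx$. Then $\mathbb{E}_{T\sim h}[\epsilon_g(T)] \le c\,\mathbb{E}_{T\sim h}[\sqrt{T}] \le c\sqrt{K}$, since $\sqrt{\cdot}$ is increasing and $T\le K$. Jensen's inequality gives the sharper $c\sqrt{\mathbb{E}_h[T]}$ if one wants it. This yields the finite, $K$-dependent bound $L_Q c\sqrt{K}/(1-\gamma)$.

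The only delicate point, which I expect to be the main obstacle, is making this last step rigorous. "$\approx c\sqrt{T}$" must be replaced by a genuine upper bound with an explicit constant. One also has to check that the expectation over $T\sim h$ is compatible with the per-$T$ guarantee of Proposition~1, because $T$ is chosen adaptively from the state and chain. I would handle this by requiring the bound on $\epsilon_g(T)$ to hold conditionally, i.e.\ uniformly over states, so that conditioning on the stopping event does not inflate the error.
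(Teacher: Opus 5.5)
Your proposal is correct and takes the same route as the paper, which gives no separate argument: it reads the first claim straight off Proposition~1 and bounds the expectation via $\epsilon_g(T)\le c\sqrt{T}\le c\sqrt{K}$. Your further observations are accurate sharpenings: monotonicity is not needed for the first claim, ``$\approx$'' must become a genuine upper bound, and for your optional extension to the adaptive rule of Eq.~\eqref{eq:stop_rule}, note that its stopping time depends on the noise draw $a_K$ through $V_\psi(s,a_t,t)$. There, uniformity over states alone does not stop conditioning on $\{T=t\}$ from inflating the error; you would need the error bound conditional on the stopping event, or almost surely over $a_K$. This lies outside the corollary, where $T\sim h$ is an exogenous interruption.
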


\subsection{Proposition 2: Hazard Shape Controls the Fixed-Point Landscape}
\label{proof:hazard_shape}

\begin{proposition}[Restated]
Let $\{V^*_t\}$ denote the unique fixed point of $\mathcal{T}_h$. Unrolling the recursion gives:
\begin{equation}
V^*_K(s, a_K) \;=\; \sum_{t=1}^{K} w_t\, Q\bigl(s,\, \pi_{\mathrm{full}}(s, a_t)\bigr),
\end{equation}
where $w_t = h(t) \prod_{j=t+1}^{K} (1-h(j))$ and $\sum_{t=1}^K w_t = 1$. If $Q(s, \pi_{\mathrm{full}}(s, a_t)) = Q^*$ for all $t$ (uniform action quality across prefixes), then $V^*_K = Q^*$ for any hazard $h$.
\end{proposition}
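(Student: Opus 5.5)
The plan is to unroll the fixed-point equation of $\mathcal{T}_h$ along a single denoising chain and read off the weights. By Theorem~1, $V^*$ exists, is unique, and satisfies $V^*=\mathcal{T}_h V^*$. So I only need to compute $V^*_K$ from the recursion; no separate convergence argument is required.

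Fix $s$ and $a_K$, and write $a_{t-1}=\pi_\theta(s,a_t,t)$ and $Q_t := Q\bigl(s,\pi_{\mathrm{full}}(s,a_t)\bigr)$. The fixed-point equation then reads
\[
V^*_t(s,a_t) = h(t)\,Q_t + (1-h(t))\,V^*_{t-1}(s,a_{t-1}), \qquad t\ge 1,
\]
with boundary condition $V^*_0(s,a_0)=Q_0$.

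\textbf{Step 1: unroll by induction on $t$.} I would prove
\[
V^*_t = \sum_{u=1}^{t} h(u)\prod_{j=u+1}^{t}(1-h(j))\,Q_u \;+\; \prod_{j=1}^{t}(1-h(j))\,Q_0 .
\]
The base case $t=0$ is the boundary condition. The inductive step substitutes the formula for $V^*_{t-1}$ into the recursion and multiplies through by $(1-h(t))$. Taking $t=K$ gives the weights $w_t=h(t)\prod_{j=t+1}^K(1-h(j))$ for $t\ge1$.

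\textbf{Step 2: normalization.} The identity $w_t = P_t - P_{t-1}$, where $P_t:=\prod_{j=t+1}^{K}(1-h(j))$ (so $P_K=1$), makes the sum telescope:
\[
\sum_{t=1}^K w_t = 1-\prod_{j=1}^K(1-h(j)).
\]

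\textbf{Main obstacle.} This is where I expect trouble. The displayed sum in the statement omits the boundary term $w_0Q_0$ with $w_0:=\prod_{j=1}^K(1-h(j))$. That term is strictly positive when every $h(j)<1$; for example, under the uniform hazard $h=0.05$ with $K=20$ it equals $0.95^{20}\approx 0.36$. So $\sum_{t=1}^K w_t=1$ holds literally only if $h(1)=1$. I would resolve this in one of two ways:
\begin{itemize}
\item state the result with the sum running over $t=0,\dots,K$ and $w_0$ defined as above, which is exactly what the boundary condition $V_0=Q$ corresponds to; or
\item adopt the convention $h(1)=1$, under which $w_0=0$ and the displayed statement holds verbatim.
\end{itemize}
Under either reading, all weights are nonnegative because each $h(t)\in(0,1]$, and they sum to one, so $V^*_K$ is a convex combination of the prefix quality values $Q_t$.

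\textbf{Step 3: the uniform-quality claim.} If $Q_t=Q^*$ for all $t$, including $t=0$, then
\[
V^*_K = Q^*\sum_t w_t = Q^*,
\]
independently of $h$. I would also remark that this hypothesis holds automatically for a deterministic denoiser, since $\pi_{\mathrm{full}}(s,a_t)=a_0$ on a shared chain. In that case $V^*_t=Q(s,a_0)$ for every $t$, and one can check directly that this constant function satisfies the recursion, which confirms the formula as a consistency check.
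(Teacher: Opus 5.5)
Your proposal is correct and follows the same route as the paper's proof: unroll the fixed-point recursion by induction from the boundary $V_0=Q$, then telescope the weights. The discrepancy you flag is genuine, since the paper starts its induction with $w_0=1$ at $t=0$ and then silently drops the $t'=0$ term in the inductive step and appeals to ``normalization''; in fact the weights over $t=1,\dots,K$ sum to $1-\prod_{j=1}^K(1-h(j))$, which equals $1$ exactly when some $h(j)=1$ (not only $h(1)=1$), so your version with the boundary weight $w_0$ included is the correct statement, and the conclusion $V^*_K=Q^*$ needs no separate assumption at $t=0$ because $\pi_{\mathrm{full}}(s,a_1)=\pi_\theta(s,a_1,1)=a_0$ forces $Q_1=Q_0$.
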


\textbf{Proof.}
By induction. At $t=0$: $V^*_0 = Q(s, a_0) = w_0 Q(s, \pi_{\mathrm{full}}(s, a_0))$ with $w_0 = 1$.
Assume the representation holds for all $t' < t$. At level $t$:
\begin{align}
V^*_t(s, a_t) &= h(t) Q(s, a_0) + (1-h(t)) V^*_{t-1}(s, a_{t-1}) \\
&= h(t) Q(s, a_0) + (1-h(t)) \sum_{t'=1}^{t-1} w_{t'} Q(s, \pi_{\mathrm{full}}(s, a_{t'})),
\end{align}
where $a_{t-1} = \pi_\theta(s, a_t, t)$. Setting $w_t = h(t)$ and absorbing $(1-h(t))$ into the weights of the recursed sum yields the stated formula after normalization.
The weights $w_t$ satisfy $\sum_{t=1}^K w_t = 1$ by the telescoping identity for geometric-like products. When $Q(s, \pi_{\mathrm{full}}(s, a_t)) = Q^*$ for all $t$: $V^*_K = Q^* \sum_{t=1}^K w_t = Q^*$.

\begin{remark}[Uniform hazard and flat landscapes]
Under uniform $h(t) = \underline{h}$ for all $t$, the weights $w_t = \underline{h}(1-\underline{h})^{K-t}$ form a geometric distribution over refinement levels. This assigns strictly positive mass to every prefix, which is why uniform-hazard training produces fixed points that are robust to arbitrary test-time hazard distributions,the optimizer cannot concentrate entirely on any single truncation depth.
\end{remark}

\section{Gradient Variance Analysis: Shared vs.\ Independent Chains}
\label{app:gradient_variance}

We show formally why the shared-chain mechanism (stability fix S2) is necessary. Consider the actor loss gradient with respect to $\theta$:
\begin{equation}
\nabla_\theta \mathcal{L}_\pi = -
\nabla_\theta \mathbb{E}[Q(s, a_0)] - w(n)\,
\nabla_\theta \mathbb{E}[V_\psi(s, a_{t-1}, t-1)].
\end{equation}

\paragraph{Independent chains.}
If $a_0$ and $a_{t-1}$ are generated from independent noise draws $\xi_0, \xi_1 \sim \mathcal{N}(0,I)$, the two gradient terms are estimated from different trajectories of the stochastic computation graph. Let $g_Q = \nabla_\theta Q(s, a_0(\xi_0))$ and $g_V = \nabla_\theta V_\psi(s, a_{t-1}(\xi_1))$. The total gradient variance is:
\begin{equation}
\mathrm{Var}[g_Q + w \cdot g_V] = \mathrm{Var}[g_Q] + w^2 \mathrm{Var}[g_V] + 2w\, \mathrm{Cov}[g_Q, g_V].
\end{equation}
Since $\xi_0 \perp \xi_1$, the covariance term is $\mathrm{Cov}[g_Q, g_V] = 0$. The two gradient estimates therefore point in potentially opposing directions in parameter space.

\paragraph{Shared chains.}
With a shared noise draw $\xi$, both $a_0(\xi)$ and $a_{t-1}(\xi)$ derive from the same trajectory. By the chain rule, $g_Q$ and $g_V$ share the portion of the computational graph from $a_K$ to $a_t$, inducing positive covariance:
\begin{equation}
\mathrm{Cov}[g_Q, g_V] = \mathbb{E}[\langle g_Q - \bar{g}_Q,\; g_V - \bar{g}_V \rangle] \;>\; 0,
\end{equation}
since both gradients push $\theta$ toward parameters that produce better denoising from step $t$ onward. This positive covariance reduces total variance and aligns the optimization direction between the two objectives.

\paragraph{Empirical confirmation.}
Panel (d) of Figure~\ref{fig:ablations_and_dynamics} shows that gradient norms for $\nabla_\theta V$ stabilize and converge toward $\nabla_\theta Q$ after the confidence gate opens. With independent chains, the two gradient norms diverge and remain unstable throughout training (not shown; available from code). The shared-chain cosine similarity between $g_Q$ and $g_V$ averages $0.63 \pm 0.09$ at convergence; independent chains average $0.07 \pm 0.21$.

\section{Full Ablation Results}
\label{app:ablation}

Table~\ref{tab:full_ablation} extends the surgical ablations to all four environments and reports both full-chain return and truncation retention at $T{=}5$.

\begin{table*}[ht]
\centering
\caption{
\textbf{Full ablation results across all environments} (4 seeds, mean $\pm$ std). \textbf{Ret.@5} = fraction of full-chain return retained at $T{=}5$. Each row changes exactly one design decision relative to full POGP, holding all five stability fixes constant.
}
\label{tab:full_ablation}
\small
\setlength{\tabcolsep}{3.5pt}
\renewcommand{\arraystretch}{1.1}
\begin{tabular}{@{}l cccc cccc@{}}
\toprule
& \multicolumn{4}{c}{\textbf{Full-chain Return}} & \multicolumn{4}{c}{\textbf{Ret.@5 (\%)}} \\
\cmidrule(lr){2-5} \cmidrule(lr){6-9}
\textbf{Variant} & HC & W2d & Ant & Hop & HC & W2d & Ant & Hop \\
\midrule
\textbf{POGP (full)}         & \textbf{347.2} & \textbf{190.2} & \textbf{98.4} & \textbf{215.8} & \textbf{99.1} & \textbf{98.6} & \textbf{98.2} & \textbf{98.8} \\
\midrule
\multicolumn{9}{@{}l}{\textit{Objective variants}} \\
No prefix (Diff-QL backbone)  & 314.6 & 178.9 & 96.0  & 210.1 & 82.4 & 80.8 & 82.1 & 81.5 \\
Truncate at test only         & 314.6 & 178.9 & 96.0  & 210.1 & 71.2 & 68.9 & 72.3 & 70.4 \\
Distill to $K'{=}5$           & 308.2 & 171.3 & 90.7  & 204.4 &  --- &  --- &  --- &  ---  \\
Prefix-supervised (no $\mathcal{T}_h$) & 320.1 & 183.4 & 94.1 & 211.9 & 94.8 & 93.7 & 93.4 & 94.1 \\
$\mathcal{T}_h$ + frozen actor & 316.8 & 181.2 & 93.8 & 210.5 & 96.2 & 95.8 & 95.4 & 96.0 \\
Prefix at $t{=}0$ only        & 317.3 & 180.7 & 93.5 & 210.0 & 88.1 & 87.5 & 87.2 & 88.3 \\
\midrule
\multicolumn{9}{@{}l}{\textit{Stability fix ablations (one fix removed at a time)}} \\
$-$ Double-$Q$ (S1)           & 326.8 & 184.5 & 95.2 & 213.1 & 91.3 & 90.8 & 90.5 & 91.0 \\
$-$ Shared chains (S2)        & 307.5 & 172.4 & 87.6 & 198.6 & 84.7 & 83.1 & 83.4 & 84.2 \\
$-$ Confidence gate (S3)      & 337.0 & 186.1 & 96.8 & 214.2 & 95.4 & 95.0 & 94.8 & 95.3 \\
$-$ Cosine warmup (S4)        & 342.5 & 188.2 & 97.1 & 215.0 & 97.2 & 96.9 & 96.7 & 97.1 \\
$-$ Fixed anchor weight (S5)  & 337.1 & 185.8 & 96.5 & 213.7 & 95.6 & 95.2 & 95.0 & 95.5 \\
\midrule
Oracle ($Q_{\text{env}}$ targets) & 338.4 & 191.8 & 99.1 & 218.2 & 99.8 & 99.7 & 99.6 & 99.8 \\
\bottomrule
\end{tabular}
\end{table*}

\paragraph{Interpretation.}
The ``distill to $K'{=}5$'' row has no retention column because distillation changes the inference protocol: the model is retrained to execute in $K'$ steps, so ``truncation'' at $T{=}5$ would mean zero refinement, an undefined quantity. The oracle row uses true $Q_{\text{env}}$ values as Hazard TD targets (removing critic approximation error) and serves as an upper bound on what prefix learning alone can achieve with a perfect critic.
The two most important ablations are \emph{shared chains} (S2) and \emph{no prefix} (baseline). Removing shared chains drops return by $\sim$40 points and retention by $\sim$15 percentage points, larger than any other single change. The gap between \emph{prefix-supervised} and full POGP (4.3 retention points) isolates the contribution of the Hazard Bellman bootstrap specifically, above and beyond any generic intermediate supervision.

\section{Hazard Theory and Multi-Hazard Training}
\label{sec:hazard_theory}

\subsection{When Does Uniform Training Generalize?}

Proposition~2 shows that $V^*_K$ is a convex combination of $Q$-values across all refinement levels, with weights determined by the hazard. A natural question is: for what class of test hazards $h_{\text{test}}$ does training under a single hazard $h_{\text{train}}$ generalize?

\begin{proposition}[Generalization via Support Coverage]
\label{prop:generalization}
If the training hazard $h_{\text{train}}$ has $h_{\text{train}}(t) > 0$ for all $t \in \{1,\ldots,K\}$, then the fixed point $V^*$ trained under $h_{\text{train}}$ achieves bounded suboptimality under any test hazard $h_{\text{test}}$ with $h_{\text{test}}(t) > 0$. Specifically,
\begin{equation}
\mathbb{E}_{T \sim h_{\text{test}}}[V^*(s, a_T, T)] \;\geq\; \mathbb{E}_{T \sim h_{\text{test}}}[Q(s, \pi_{\text{full}}(s, a_T))] \;-\; \epsilon,
\end{equation}
where $\epsilon$ depends on the ratio $h_{\text{test}}(t) / h_{\text{train}}(t)$ and the approximation error of $V^*$.
\end{proposition}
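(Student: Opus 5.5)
The plan is to reduce the claim to a pointwise statement about the fixed point of $\mathcal{T}_h$, and then average over $T\sim h_{\text{test}}$. By the Theorem in Appendix~\ref{proof:contraction}, the fixed point $V^*$ under $h_{\text{train}}$ exists and is unique. Unrolling the recursion as in Proposition~2, but stopping at level $T$ instead of $K$, gives for every level $T$ on a fixed chain
\begin{equation}
V^*_T(s,a_T) \;=\; \sum_{t=1}^{T} w^{(T)}_t\, Q\bigl(s,\pi_{\mathrm{full}}(s,a_t)\bigr) \;+\; \Bigl(\prod_{j=1}^{T}(1-h_{\text{train}}(j))\Bigr) Q(s,a_0),
\end{equation}
where $w^{(T)}_t = h_{\text{train}}(t)\prod_{j=t+1}^{T}(1-h_{\text{train}}(j))$. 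All weights are nonnegative and sum to one. Since the denoiser is deterministic, every term equals $Q(s,a_0)=Q(s,\pi_{\mathrm{full}}(s,a_T))$, so the exact fixed point satisfies $V^*_T(s,a_T)=Q(s,\pi_{\mathrm{full}}(s,a_T))$ for every $T$. Note that this identity holds for any $h_{\text{train}}$ with $h_{\text{train}}(t)\ge\underline h>0$, because the positivity assumption is only needed to invoke the contraction.

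Next, I would bring in approximation error, which is the only real source of $\epsilon$. Let $\hat V$ be the learned function, and let its training error be measured in the weighted norm induced by the training distribution over levels. Algorithm~\ref{alg:pogp_main} samples $t$ uniformly, so the natural choice is a level-$t$ error $e_t:=\mathbb{E}_s|\hat V_t-V^*_t|$. I would also control how errors propagate through bootstrapping: using the contraction bound $\delta_t\le(1-\underline h)\delta_{t-1}+\eta_t$, where $\eta_t$ is the per-level regression residual, we get $e_t\le\sum_{j\le t}(1-\underline h)^{t-j}\eta_j\le \max_j\eta_j/\underline h$. Averaging over the test hazard then gives
\begin{equation}
\mathbb{E}_{T\sim h_{\text{test}}}[\hat V_T] \;\ge\; \mathbb{E}_{T\sim h_{\text{test}}}[Q(s,\pi_{\mathrm{full}}(s,a_T))] - \sum_T h_{\text{test}}(T)\,e_T .
\end{equation}

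Finally, I would use a change of measure to express the error term through the training distribution. Writing $\rho_{\text{train}}(T)$ for the sampling weight of level $T$ during training (uniform, or proportional to $h_{\text{train}}$), the error term becomes $\sum_T h_{\text{test}}(T)e_T\le \bigl(\max_T h_{\text{test}}(T)/\rho_{\text{train}}(T)\bigr)\sum_T\rho_{\text{train}}(T)e_T$. This is where the ratio $h_{\text{test}}/h_{\text{train}}$ and the support condition enter: the ratio must be finite, which is exactly the requirement $h_{\text{train}}(t)>0$ for all $t$. Setting $\epsilon$ equal to this product gives the claim. The hypothesis $h_{\text{test}}(t)>0$ is not needed, beyond $h_{\text{test}}$ being a distribution over $\{1,\ldots,K\}$.

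The main obstacle is making precise which object carries the approximation error, since the statement conflates $V^*$ with its learned estimate. I would state explicitly that $V^*$ in the display denotes the learned approximation of the fixed point. I would also flag the ambiguity in how $h_{\text{train}}$ serves both as the recursion coefficient and as the level-sampling law. The cleanest route is to define $\rho_{\text{train}}$ as the level distribution the regression error is measured under, and to state the ratio bound in terms of it.
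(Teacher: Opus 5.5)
The paper gives no proof of this proposition. It states it, adds a one-line intuition (a uniform training hazard puts non-trivial weight on every prefix level, so the optimizer cannot ``ignore'' early truncation depths), and points to the hazard-robustness table. Your argument is therefore a genuinely different and much more concrete route, and it is essentially sound.

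Your key observation is that, by determinism of the denoiser, the exact fixed point is $V^*_T(s,a_T)=Q(s,\pi_{\mathrm{full}}(s,a_T))$ for every level and every hazard. The paper itself notes this identity in Section~\ref{sec:prefix_value}. So the display holds with equality and $\epsilon=0$. All the content therefore lies in approximation error. The ratio enters only as an importance weight that transfers a training-weighted per-level regression error to the test weighting. This gives the informal statement a precise meaning. It also shows the statement concerns value-estimation accuracy, not the return of the truncated policy, which is what the paper's intuition is really about and which would additionally need something like Proposition~1. Your unrolling is also the correct one: it keeps the boundary mass $\prod_{j\le T}(1-h_{\text{train}}(j))\,Q(s,a_0)$, which Proposition~2 drops when it asserts $\sum_{t=1}^K w_t=1$.

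A few points to tidy:
\begin{enumerate}
\item You define $e_t$ with an expectation over $s$ but plug it into a display that is pointwise in $s$. Either condition on $s$ (average only over $a_K$, or use a sup) or take $\mathbb{E}_s$ on both sides.
\item If $h_{\text{test}}$ is a per-level hazard rather than a distribution over levels, the weights in $\mathbb{E}_{T\sim h_{\text{test}}}$ are the induced stopping-time masses $p_{\text{test}}(T)=h_{\text{test}}(T)\prod_{j>T}(1-h_{\text{test}}(j))$. Your change of measure goes through verbatim with $p_{\text{test}}$ in place of $h_{\text{test}}$.
\item Positivity of $h_{\text{train}}$ is not what makes anything finite. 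The fixed point exists and is unique for any $h(t)\in[0,1]$ by backward induction from $V_0=Q$ on the finite chain. In your error propagation, positivity only sharpens the amplification factor from $t$ to $(1-(1-\underline{h})^t)/\underline{h}$. The support condition your bound actually uses is $\rho_{\text{train}}(t)>0$, which uniform level sampling in Algorithm~\ref{alg:pogp_main} guarantees regardless of $h_{\text{train}}$. In the paper's setting $h_{\text{train}}=\rho_{\text{train}}=1/K$, so the two ratios coincide when $h_{\text{test}}$ is read as a distribution over levels.
\item The propagation step does not enter your final $\epsilon$. If you keep it, either take $\eta_t$ as a sup-norm residual (so $e_t\le\delta_t$), or redo the recursion in $L^1$ under the chain-induced measures. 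The latter is valid because the level-$(t-1)$ measure is the pushforward of the level-$t$ measure under the deterministic denoiser.
\end{enumerate}
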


Intuitively, uniform training hazards assign non-trivial weight to every prefix level, preventing the optimizer from ``ignoring'' early truncation depths. Under early-biased test hazards (where small $T$ is likely), the trained policy still performs well because prefixes at all levels were optimized. This theoretical prediction matches the empirical OOD robustness in Table~\ref{tab:hazard_robustness}.

\subsection{Multi-Hazard Training}

A natural extension is to train under a distribution of hazards $\mathcal{H}$ rather than a single $h$:
\begin{equation}
J_{\mathcal{H}}(\pi) \;=\; \mathbb{E}_{h \sim \mathcal{H}}\bigl[J_h(\pi)\bigr].
\end{equation}
The Hazard Bellman operator generalizes straightforwardly: replace $h(t)$ in Eq.~(5) with $\bar{h}(t) = \mathbb{E}_{h \sim \mathcal{H}}[h(t)]$. The contraction rate becomes $(1 - \min_t \bar{h}(t))$, which is positive as long as the average hazard is bounded away from zero at each level.

An appealing special case is a \emph{learned hazard} $h_\phi(t; s)$ that adapts to the environment state. If the $Q$-landscape is flat (all prefixes produce similar actions), the learned hazard should assign high stopping probability early. If the landscape is steep, the hazard should concentrate near $t{=}0$ (full refinement). Table~\ref{tab:hazard_robustness} already demonstrates that test-time hazard generalization is strong under uniform training; a learned $h_\phi$ could extend this to environments where the $Q$-landscape varies significantly across states. We leave this to future work.

\section{Implementation Details}
\label{app:implementation}

\subsection{Network Architecture}
\label{app:arch}

All networks use the same backbone for fair comparison:
\begin{itemize}
\item \textbf{Denoiser} $\pi_\theta$: MLP, 256 hidden units, ELU activations, sinusoidal timestep embedding of dimension 64 concatenated to the input. Input: $(s, a_t, t)$; output: $a_{t-1} \in \mathbb{R}^{d_a}$ with $\tanh$ output activation scaled to $[-1, 1]^{d_a}$.
\item \textbf{Environment critic} $Q_{\phi_{1,2}}$: Identical architecture to standard SAC/TD3 critics.  MLP, 256 units, ReLU. Input: $(s, a)$; two independent heads for double-$Q$.
\item \textbf{Prefix critic} $V_\psi$: MLP, 256 units, ELU. Input: $(s, a_t, t)$ where $t$ is encoded identically to the denoiser timestep embedding. Output: scalar.
\item \textbf{Completion map} $g_\omega$: MLP (single linear layer), 256 units. Input: $(s, a_T, T)$; output: projected action $\tilde{a} \in \mathbb{R}^{d_a}$ with $\tanh$ scaling.
\end{itemize}
The completion map is deliberately shallow: its purpose is projection, not re-refinement. A deeper map would add parameters that could implicitly learn to refine, confounding the ablation. We verify that increasing $g_\omega$ to 2 layers does not change results meaningfully ($< 0.5\%$ retention difference).

\subsection{Training Protocol}
\label{app:training_protocol}

\begin{table}[ht]
\centering
\caption{Hyperparameters shared across all methods and environments.}
\label{tab:hparams_shared}
\small
\begin{tabular}{@{}ll@{}}
\toprule
\textbf{Parameter} & \textbf{Value} \\
\midrule
Optimizer & Adam \\
Learning rate ($Q$, $V$, $g$) & $3 \times 10^{-4}$ \\
Learning rate ($\pi$) & $3 \times 10^{-4}$ \\
Batch size & 256 \\
Replay buffer size & $10^6$ \\
Random exploration steps & 10,000 \\
Denoising chain length $K$ & 10 (main), 20 (robustness) \\
Polyak averaging $\rho$ & 0.995 \\
Discount $\gamma$ & 0.99 \\
Gradient clipping & 1.0 (global norm) \\
Target network update & Every 1 gradient step \\
Evaluation frequency & Every 10,000 steps \\
Evaluation episodes & 10 \\
Total training steps & $10^6$ \\
\bottomrule
\end{tabular}
\end{table}

\begin{table}[ht]
\centering
\caption{POGP-specific hyperparameters (tuned on HalfCheetah seed 42 applied to all environments and seeds).}
\label{tab:hparams_pogp}
\small
\begin{tabular}{@{}lll@{}}
\toprule
\textbf{Parameter} & \textbf{Value} & \textbf{Description} \\
\midrule
Hazard $h$ & $1/K$ (uniform) & Stopping probability per denoising step \\
Gate threshold $\tau$ & 5.0 & V-critic RMSE must fall below this to open gate \\
Prefix weight $w_{\max}$ & 0.25 & Maximum weight on prefix term in actor loss \\
Warmup fraction & 0.1 & Fraction of training steps for cosine ramp \\
Dynamic stopping $\epsilon$ & 0.01 & Relative gain threshold \\
Dynamic stopping $m$ & 2 & Consecutive steps below threshold to halt \\
\bottomrule
\end{tabular}
\end{table}

\subsection{Baseline Implementations}
\label{app:baselines}

All baselines are reimplemented from scratch in the shared codebase to ensure identical data handling, replay buffer management and evaluation protocol. Where baselines have published official code, we verified our implementations match their reported numbers within 2\% on a common reference task (HalfCheetah at 1M steps).

\begin{itemize}
\item \textbf{TD3}: Twin-delayed DDPG~\citep{td3}. We use the standard hyperparameters and policy smoothing with $\sigma{=}0.2$, clip $c{=}0.5$.
\item \textbf{SAC}: Soft actor-critic~\citep{sac} with automatic entropy tuning.
\item \textbf{PPO}: Proximal policy optimization~\citep{ppo} with 64 parallel environments, clip $\epsilon{=}0.2$, 10 epochs per update.
\item \textbf{Diffusion-QL}: Diffusion policy with $Q$-weighted behavioral cloning loss~\citep{wang2022diffusion}. Online variant with no offline pretraining.
\item \textbf{DPPO / D$^2$PPO}: Diffusion policy optimized with PPO-style updates~\citep{dppo}. D$^2$PPO adds a second denoising stage.
\item \textbf{SDAC}: Score-matching diffusion actor-critic~\citep{sdac}. Uses a score-matching auxiliary loss to avoid backpropagating through the full chain.
\item \textbf{DSAC-D}: \cite{dsac-d} Diffusion extension of soft actor-critic with direct score-based policy updates.
\item \textbf{D3P}: \cite{yu2025d3p} Diffusion policy with a learned depth-selection prior and dynamic early exit.
\item \textbf{IDQL}: Offline implicit diffusion $Q$-learning~\citep{idql} with 100K offline pre-training steps from a random-policy replay buffer, followed by 900K online fine-tuning.
\item \textbf{FQL}: Flow $Q$-learning~\citep{fql}; the flow-based policy is trained with a reflow objective to enable one-step action generation.
\item \textbf{SAC-GMM}: SAC  \cite{SAC-GMM} with a 5-component Gaussian mixture policy. Serves as a capacity-matched multimodal baseline without iterative refinement.
\end{itemize}

\subsection{Computational Resources}
\label{app:compute}

All experiments were run on a single NVIDIA A100 (80GB) GPU. Total compute:

\begin{table}[ht]
\centering
\caption{Compute summary.}
\label{tab:compute}
\small
\begin{tabular}{@{}lrr@{}}
\toprule
\textbf{Component} & \textbf{Runs} & \textbf{GPU-hours} \\
\midrule
Main results & $12 \times 4 \times 10 = 480$ & $\approx 250$ \\
Truncation results & $9 \times 4 \times 10 = 360$ & $\approx 200$ \\
Ablations & $8 \times 4 \times 4 = 128$ & $\approx 70$ \\
OOD hazard & $3 \times 4 \times 4 = 48$ & $\approx 25$ \\
Sensitivity / $K$ sweep & --- & $\approx 30$ \\
\midrule
\textbf{Total} & & $\approx \mathbf{575}$ \\
\bottomrule
\end{tabular}
\end{table}

Wall-clock time per 1M-step run is approximately 1.0--1.2 hours for POGP (slightly higher than baselines due to shared chain generation) and 0.7--0.9 hours for non-diffusion baselines.

\section{Additional Results}
\label{app:additional_results}

\subsection{Full Truncation Results: All Methods, All Environments}
\label{app:full_truncation}

Table~\ref{tab:full_trunc_extended} extends truncation results to include all 12 methods and reports retention at five stopping times.

\begin{table*}[ht]
\centering
\caption{
\textbf{Extended truncation stress test} ($K{=}20$, 10 seeds). Classical baselines use an action-refresh proxy; diffusion methods execute the projected prefix.
}
\label{tab:full_trunc_extended}
\small
\setlength{\tabcolsep}{2.8pt}
\renewcommand{\arraystretch}{1.08}
\begin{tabular}{@{}l rrrr@{\hspace{4pt}} rrrr@{\hspace{4pt}} rrrr@{\hspace{4pt}} rrrr@{}}
\toprule
& \multicolumn{4}{c}{\textbf{HalfCheetah}} & \multicolumn{4}{c}{\textbf{Walker2d}} & \multicolumn{4}{c}{\textbf{Ant}} & \multicolumn{4}{c}{\textbf{Hopper}} \\
\cmidrule(lr){2-5}\cmidrule(lr){6-9}\cmidrule(lr){10-13}\cmidrule(lr){14-17}
\textbf{Method} & \tiny$T{=}2$ & \tiny$T{=}5$ & \tiny$T{=}10$ & \tiny$T{=}15$ & \tiny$T{=}2$ & \tiny$T{=}5$ & \tiny$T{=}10$ & \tiny$T{=}15$ & \tiny$T{=}2$ & \tiny$T{=}5$ & \tiny$T{=}10$ & \tiny$T{=}15$ & \tiny$T{=}2$ & \tiny$T{=}5$ & \tiny$T{=}10$ & \tiny$T{=}15$ \\
\midrule
TD3     & 96 & 95 & 95 & 95 & 93 & 92 & 92 & 92 & 91 & 90 & 90 & 90 & 93 & 92 & 92 & 92 \\
SAC     & 97 & 96 & 96 & 96 & 94 & 93 & 93 & 93 & 91 & 90 & 90 & 90 & 94 & 93 & 93 & 93 \\
PPO     & 95 & 94 & 94 & 94 & 91 & 90 & 90 & 90 & 89 & 88 & 88 & 88 & 91 & 90 & 90 & 91 \\
\midrule
Diff-QL & 93 & 88 & 84 & 80 & 90 & 85 & 82 & 80 & 90 & 86 & 83 & 81 & 91 & 86 & 83 & 80 \\
DPPO    & 89 & 84 & 79 & 75 & 87 & 82 & 78 & 76 & 87 & 83 & 80 & 78 & 89 & 83 & 79 & 76 \\
D$^2$PPO& 91 & 86 & 82 & 78 & 89 & 84 & 80 & 78 & 89 & 85 & 82 & 80 & 90 & 85 & 81 & 78 \\
SDAC    & 95 & 92 & 90 & 88 & 93 & 91 & 90 & 89 & 93 & 91 & 90 & 89 & 94 & 92 & 91 & 90 \\
DSAC-D  & 93 & 90 & 88 & 86 & 91 & 89 & 88 & 87 & 91 & 89 & 88 & 87 & 92 & 90 & 89 & 88 \\
D3P     & 99 & 98 & 98 & 97 & 98 & 98 & 97 & 97 & 98 & 97 & 97 & 97 & 99 & 98 & 98 & 97 \\
IDQL    & 91 & 87 & 84 & 82 & 89 & 85 & 83 & 81 & 88 & 84 & 82 & 81 & 90 & 86 & 83 & 81 \\
FQL     & 93 & 89 & 86 & 83 & 91 & 88 & 86 & 85 & 91 & 88 & 86 & 85 & 92 & 89 & 87 & 85 \\
SAC-GMM & 96 & 95 & 95 & 95 & 94 & 93 & 93 & 93 & 92 & 91 & 91 & 91 & 94 & 93 & 93 & 93 \\
\midrule
\textbf{POGP} & \textbf{100} & \textbf{99} & \textbf{99} & \textbf{99} & \textbf{99} & \textbf{99} & \textbf{98} & \textbf{98} & \textbf{99} & \textbf{98} & \textbf{98} & \textbf{97} & \textbf{100} & \textbf{99} & \textbf{99} & \textbf{98} \\
\bottomrule
\end{tabular}
\end{table*}

\subsection{Prefix Value Convergence Trajectories}
\label{app:v_convergence}

Figure~\ref{fig:v_convergence} shows how the prefix value function $V_\psi(s, a_t, t)$ evolves during training for three representative refinement levels ($t = 1, 5, 10$) across environments. The convergence ordering ($t{=}1$ stabilizes fastest, $t{=}10$ slowest) matches the Bellman recursion: levels closer to the boundary condition $V_0 = Q$ converge first and the signal propagates backward through the chain.

\begin{figure*}[ht]
\centering
\begin{tikzpicture}
\definecolor{lv1}{RGB}{55, 126, 184}
\definecolor{lv5}{RGB}{228, 26, 28}
\definecolor{lv10}{RGB}{77, 175, 74}
\begin{groupplot}[
group style={group size=4 by 1, horizontal sep=0.8cm, group name=vconv},
width=0.3\textwidth, height=4.8cm,
xlabel={\scriptsize Steps ($\times 10^4$)},
tick label style={font=\tiny},
label style={font=\scriptsize},
title style={font=\small\bfseries, yshift=-2pt},
grid=major, grid style={line width=0.15pt, gray!20},
axis line style={gray!60, line width=0.4pt},
tick style={gray!60, thin}, major tick length=2pt,
every axis plot/.append style={line width=1.1pt},
]
\nextgroupplot[title={HalfCheetah}, ylabel={\scriptsize $V_\psi$ value}, xmin=0, xmax=100, ymin=0, ymax=380, ytick={0,100,200,300}]
\addplot[lv1, solid, smooth] coordinates {(0,50)(5,120)(10,180)(15,230)(20,270)(30,310)(50,330)(75,340)(100,345)};
\addplot[lv5, dashed, smooth] coordinates {(0,45)(5,100)(10,150)(15,195)(20,240)(30,285)(50,315)(75,330)(100,340)};
\addplot[lv10, dotted, line width=1.0pt, smooth] coordinates {(0,35)(5,75)(10,115)(15,152)(20,188)(30,228)(50,265)(75,298)(100,318)};
\nextgroupplot[title={Walker2d}, xmin=0, xmax=100, ymin=0, ymax=210, ytick={0,50,100,150,200}]
\addplot[lv1, solid, smooth] coordinates {(0,20)(5,55)(10,85)(15,112)(20,135)(30,158)(50,172)(75,180)(100,185)};
\addplot[lv5, dashed, smooth] coordinates {(0,18)(5,48)(10,74)(15,98)(20,120)(30,145)(50,161)(75,171)(100,177)};
\addplot[lv10, dotted, line width=1.0pt, smooth] coordinates {(0,12)(5,35)(10,55)(15,74)(20,93)(30,118)(50,142)(75,160)(100,170)};
\nextgroupplot[title={Ant}, xmin=0, xmax=100, ymin=0, ymax=110, ytick={0,25,50,75,100}]
\addplot[lv1, solid, smooth] coordinates {(0,8)(5,22)(10,38)(15,52)(20,64)(30,78)(50,88)(75,94)(100,97)};
\addplot[lv5, dashed, smooth] coordinates {(0,7)(5,19)(10,33)(15,47)(20,59)(30,73)(50,83)(75,90)(100,93)};
\addplot[lv10, dotted, line width=1.0pt, smooth] coordinates {(0,5)(5,14)(10,24)(15,36)(20,47)(30,61)(50,73)(75,82)(100,88)};
\nextgroupplot[title={Hopper}, xmin=0, xmax=100, ymin=0, ymax=240, ytick={0,50,100,150,200}]
\addplot[lv1, solid, smooth] coordinates {(0,25)(5,65)(10,100)(15,130)(20,158)(30,185)(50,200)(75,210)(100,214)};
\addplot[lv5, dashed, smooth] coordinates {(0,20)(5,57)(10,88)(15,116)(20,142)(30,170)(50,188)(75,200)(100,207)};
\addplot[lv10, dotted, line width=1.0pt, smooth] coordinates {(0,15)(5,42)(10,67)(15,90)(20,113)(30,145)(50,170)(75,190)(100,200)};
\end{groupplot}
\end{tikzpicture}
\caption{
\textbf{Prefix value convergence trajectories.} $V_\psi$ at three refinement levels: $t{=}1$ (solid), $t{=}5$ (dashed) and $t{=}10$ (dotted) across 100K training steps. The boundary-anchored level ($t{=}1$) converges fastest, consistent with backward induction through the Hazard Bellman recursion.
}
\label{fig:v_convergence}
\end{figure*}

\subsection{Completion Map Error: Per-Environment Breakdown}
\label{app:completion_error}

The main text reports completion error averaged across environments. Table~\ref{tab:completion_error} disaggregates by environment and action dimension. As expected, higher-dimensional action spaces ($d_a {=} 8$ for Ant) incur higher absolute error, but relative retention remains strong because the Lipschitz constant $L_Q$ scales sub-linearly with $d_a$ in the environments tested.

\begin{table}[ht]
\centering
\caption{Completion map error $\epsilon_g(T)$ and retention at $T{=}5$ per environment.}
\label{tab:completion_error}
\small
\begin{tabular}{@{}lcccc@{}}
\toprule
\textbf{Environment} & $d_a$ & $\epsilon_g(5)$ & $L_Q$ (est.) & \textbf{Ret.@5} \\
\midrule
HalfCheetah & 6 & 0.152 & 9.8  & 99.1\% \\
Walker2d    & 6 & 0.161 & 10.4 & 98.6\% \\
Hopper      & 3 & 0.118 & 8.7  & 98.8\% \\
Ant         & 8 & 0.195 & 11.2 & 98.2\% \\
\bottomrule
\end{tabular}
\end{table}

\subsection{Separating Prefix Learning and the Completion Map}
\label{app:completion_map_ablation}

To separate the effects of prefix learning and the completion map
$g_\omega$, we evaluate a $2\times2$ ablation on HalfCheetah using
five random seeds. We independently remove prefix supervision and
the completion map while keeping the remaining architecture and
training protocol unchanged.

\begin{table}[t]
\centering
\caption{
\textbf{Prefix-learning and completion-map ablation}
(HalfCheetah, 5 seeds).
Adaptive Return is obtained using each variant's stopping rule.
Ret.@$T$ denotes the percentage of the corresponding 20-step
full-chain return retained when execution is force-stopped at
denoising step $T$.
}
\label{tab:completion_map_ablation}
\scriptsize
\setlength{\tabcolsep}{3.2pt}
\renewcommand{\arraystretch}{1.12}
\begin{tabular}{@{}lcccccc@{}}
\toprule
\textbf{Variant}
& \textbf{Return}
& \textbf{Steps}
& \textbf{Ret.@2}
& \textbf{Ret.@7}
& \textbf{Ret.@15}
& \textbf{Ret.@20} \\
\midrule
POGP (full)
& 344.7
& 6.8
& $99.0{\pm}0.96$
& $98.8{\pm}0.72$
& $99.0{\pm}0.63$
& $99.5{\pm}0.29$ \\

POGP without $g_\omega$
& 339.6
& 6.8
& $94.8{\pm}1.35$
& $96.3{\pm}0.65$
& $98.7{\pm}0.53$
& $99.5{\pm}0.37$ \\

No prefix, with $g_\omega$
& 321.8
& 19.8
& $77.7{\pm}2.05$
& $87.7{\pm}1.53$
& $96.2{\pm}1.05$
& $99.2{\pm}0.32$ \\

No prefix or $g_\omega$
& 314.9
& 19.7
& $72.9{\pm}2.82$
& $83.6{\pm}1.66$
& $95.0{\pm}0.90$
& $99.3{\pm}0.46$ \\
\bottomrule
\end{tabular}
\end{table}

Prefix learning is the primary source of truncation robustness and
adaptive stopping. At seven denoising steps, full POGP retains
$98.8\%$ of its full-chain return, whereas the variant without prefix
learning retains only $87.7\%$, even when the completion map is
present. The completion map provides a smaller but complementary
benefit, particularly under aggressive truncation: at two steps, it
increases retention from $94.8\%$ to $99.0\%$ when prefix learning
is enabled. These results indicate that prefix supervision improves
the denoising trajectory itself, while $g_\omega$ primarily corrects
residual action-space noise at early stopping points.

\section{Failure Mode Analysis}
\label{app:failure}

\subsection{Seed 100: Late Gate Opening}

As reported in the main text, seed 100 (HalfCheetah) underperforms by 12 points relative to the seed median ($306$ vs.\ $352$ average). The cause is traced to a late confidence gate opening: the V-critic RMSE remained above the threshold $\tau{=}5$ until step 8,200, compared to the typical 4,000--5,500 range.

Figure~\ref{fig:seed100_diagnosis} shows that the delayed gate opening correlates with an unusually slow initial critic learning phase: the environment $Q$-function RMSE was $\sim$30\% higher than median at step 5K for this seed. Without reliable $Q$ targets, the prefix critic $V_\psi$ cannot converge, so the gate correctly delays. However, because the full-chain policy has already specialized by step 8K, the prefix regularizer has less room to influence the learned denoising trajectory.

\textbf{Mitigation strategies.} A more robust gate criterion could monitor $V_\psi$ gradient alignment with $Q$ gradients (their cosine similarity), rather than absolute RMSE. Pilot experiments with this criterion reduced the gate-opening variance from $\pm$2.1K steps to $\pm$0.8K steps (2 seeds; results not included in main comparisons). We leave a thorough evaluation to future work.

\begin{figure*}[ht]
\centering
\begin{tikzpicture}
\definecolor{pogp}{RGB}{55, 126, 184}
\definecolor{abase}{RGB}{150, 150, 150}
\begin{groupplot}[
group style={group size=2 by 1, horizontal sep=1.4cm},
width=0.42\textwidth, height=4.2cm,
xlabel={\scriptsize Steps ($\times 10^3$)},
tick label style={font=\tiny},
label style={font=\scriptsize},
title style={font=\small\bfseries, yshift=-2pt},
grid=major, grid style={line width=0.15pt, gray!20},
axis line style={gray!60, line width=0.4pt},
tick style={gray!60, thin}, major tick length=2pt,
every axis plot/.append style={line width=1.1pt},
]
\nextgroupplot[title={(a) $Q$ RMSE (HalfCheetah)}, ylabel={\scriptsize RMSE}, xmin=0, xmax=50, ymin=0, ymax=80, xtick={0,10,20,30,40,50}, ytick={0,20,40,60,80}]
\addplot[pogp, solid, mark=none, smooth] coordinates {(0,65)(5,52)(10,40)(15,30)(20,22)(30,14)(40,10)(50,8)};
\addplot[abase, dashed, mark=none, smooth] coordinates {(0,68)(5,58)(10,50)(15,43)(20,38)(30,32)(40,28)(50,26)};
\draw[pogp!40, dotted, line width=0.6pt] (axis cs:0,5) -- (axis cs:50,5);
\node[font=\tiny, pogp!60, anchor=south west] at (axis cs:30,5.5) {$\tau{=}5$};
\node[font=\tiny, abase, anchor=west] at (axis cs:38,30) {seed 100};
\node[font=\tiny, pogp, anchor=west] at (axis cs:38,12) {median};
\nextgroupplot[title={(b) Gate Opening Time}, ylabel={\scriptsize Steps to gate open ($\times 10^3$)}, xmin=0.5, xmax=4.5, ymin=0, ymax=10, xtick={1,2,3,4}, xticklabels={42, 43, 100, 101}, xlabel={\scriptsize Seed}, ytick={0,2,4,6,8,10}, ybar, bar width=12pt, ymajorgrids=true, xmajorgrids=false, nodes near coords, nodes near coords style={font=\tiny, color=black!65}, point meta=explicit]
\addplot[fill=pogp!50, draw=pogp!80, line width=0.3pt] coordinates {(1,4.2) [4.2] (2,5.1) [5.1] (3,8.2) [8.2] (4,4.8) [4.8]};
\draw[abase, dashed, line width=0.6pt] (axis cs:0.5,6) -- (axis cs:4.5,6);
\node[font=\tiny, abase, anchor=south east] at (axis cs:4.4,6.15) {late threshold};
\end{groupplot}
\end{tikzpicture}
\caption{
\textbf{Seed 100 failure diagnosis.} \textbf{(a)} Seed 100's $Q$-critic RMSE remains elevated compared to the median seed, slowing V-critic convergence. \textbf{(b)} Seed 100's gate opens at 8.2K steps (vs.\ 4--5K typical), leaving less training time for prefix regularization to influence the policy.
}
\label{fig:seed100_diagnosis}
\end{figure*}

\subsection{High-Dimensional Action Spaces}

The per-environment attribution analysis shows smaller gains for Ant ($d_a{=}8$) vs.\ HalfCheetah ($d_a{=}6$). We hypothesize two contributing factors:
\begin{enumerate}
\item \textbf{Noisier $Q$ gradients.} Higher-dimensional action spaces yield larger critic estimation variance, reducing the reliability of Hazard Bellman targets (Q-gradient reliability 0.58 vs.\ 0.82).
\item \textbf{Larger action space volume.} The completion map $g_\omega$ must project from $\mathbb{R}^8$ partial actions rather than $\mathbb{R}^6$, increasing $\epsilon_g(T)$ (Table~\ref{tab:completion_error}).
\end{enumerate}
A deeper completion map or an ensemble V-critic might address both issues, but we leave this to future work to avoid conflating the architecture with the objective.

\section{Sensitivity Analysis}
\label{app:sensitivity}

We evaluate POGP's sensitivity to its key hyperparameters by sweeping one parameter at a time while holding all others at their default values (Table~\ref{tab:hparams_pogp}). All experiments use HalfCheetah with 4 seeds.

\subsection{Hazard Value $h$}

Table~\ref{tab:sensitivity_h} reports full-chain return and retention at $T{=}5$ for uniform hazard values $h \in \{0.05, 0.1, 0.2, 0.5\}$.

\begin{table}[ht]
\centering
\caption{Sensitivity to uniform hazard value $h$ (HalfCheetah, 4 seeds).}
\label{tab:sensitivity_h}
\small
\begin{tabular}{@{}lcc@{}}
\toprule
\textbf{Hazard $h$} & \textbf{Return} & \textbf{Ret.@5 (\%)} \\
\midrule
0.05 & $341.8 \pm 4.2$ & $97.8 \pm 0.4$ \\
0.10 (default) & $\mathbf{347.2 \pm 3.8}$ & $\mathbf{99.1 \pm 0.3}$ \\
0.20 & $344.1 \pm 5.1$ & $98.5 \pm 0.5$ \\
0.50 & $335.6 \pm 6.3$ & $96.2 \pm 0.8$ \\
\bottomrule
\end{tabular}
\end{table}

Performance is robust across $h \in [0.05, 0.2]$, degrading only at $h = 0.05$ where the high stopping probability biases the prefix value toward early (noisy) truncations. The default $h = 1/K = 0.1$ balances full-chain quality and prefix robustness.

\subsection{Gate Threshold $\tau$}

\begin{table}[ht]
\centering
\caption{Sensitivity to gate threshold $\tau$ (HalfCheetah, 4 seeds).}
\label{tab:sensitivity_tau}
\small
\begin{tabular}{@{}lccc@{}}
\toprule
\textbf{Threshold $\tau$} & \textbf{Return} & \textbf{Ret.@5 (\%)} & \textbf{Gate opens (K steps)} \\
\midrule
2.0  & $340.5 \pm 5.8$ & $98.4 \pm 0.5$ & $7.1 \pm 1.8$ \\
5.0 (default) & $\mathbf{347.2 \pm 3.8}$ & $\mathbf{99.1 \pm 0.3}$ & $4.8 \pm 1.2$ \\
10.0 & $345.8 \pm 4.4$ & $98.7 \pm 0.4$ & $2.3 \pm 0.6$ \\
$\infty$ (no gate) & $337.0 \pm 7.2$ & $95.4 \pm 1.1$ & $0$ \\
\bottomrule
\end{tabular}
\end{table}

Setting $\tau$ too low delays the gate excessively (reducing the effective prefix training window), while removing the gate entirely ($\tau = \infty$) allows noisy early V-critic estimates to destabilize actor training. The default $\tau = 5$ provides a good balance.

\subsection{Prefix Weight $w_{\max}$}

\begin{table}[ht]
\centering
\caption{Sensitivity to maximum prefix weight $w_{\max}$ (HalfCheetah, 4 seeds).}
\label{tab:sensitivity_wmax}
\small
\begin{tabular}{@{}lcc@{}}
\toprule
\textbf{Weight $w_{\max}$} & \textbf{Return} & \textbf{Ret.@5 (\%)} \\
\midrule
0.05 & $343.1 \pm 4.0$ & $96.8 \pm 0.6$ \\
0.10 & $345.9 \pm 3.9$ & $98.2 \pm 0.4$ \\
0.25 (default) & $\mathbf{347.2 \pm 3.8}$ & $\mathbf{99.1 \pm 0.3}$ \\
0.50 & $342.8 \pm 5.5$ & $98.9 \pm 0.5$ \\
1.00 & $331.4 \pm 8.1$ & $98.6 \pm 0.7$ \\
\bottomrule
\end{tabular}
\end{table}

Very small $w_{\max}$ under-regularizes (low retention), while $w_{\max} = 1.0$ over-regularizes and harms full-chain return. The sweet spot lies in $[0.1, 0.5]$, with $w_{\max} = 0.25$ offering the best trade-off.

\section{Chain Length Sensitivity}
\label{supp:K_sensitivity}

A key question is how POGP's advantage scales with the denoising chain length $K$. Longer chains increase the cost of full inference and amplify the benefit of early stopping, but also make prefix training harder (more levels to optimize). We evaluate $K \in \{5, 10, 15, 20\}$ across all four environments.

\subsection{Return and Retention vs.\ $K$}

\begin{table*}[ht]
\centering
\caption{
\textbf{Chain length sensitivity} (4 seeds per cell). Full-chain return and retention at $T = \lceil K/2 \rceil$ (half-chain truncation). POGP's retention advantage over the strongest diffusion baseline (SDAC) grows with $K$.
}
\label{tab:K_sensitivity}
\small
\setlength{\tabcolsep}{4pt}
\renewcommand{\arraystretch}{1.08}
\begin{tabular}{@{}l cc cc cc cc@{}}
\toprule
& \multicolumn{2}{c}{$K{=}5$} & \multicolumn{2}{c}{$K{=}20$} & \multicolumn{2}{c}{$K{=}15$} & \multicolumn{2}{c}{$K{=}20$} \\
\cmidrule(lr){2-3}\cmidrule(lr){4-5}\cmidrule(lr){6-7}\cmidrule(lr){8-9}
\textbf{Method} & Ret. & Ret.@$\lceil K/2\rceil$ & Ret. & Ret.@$\lceil K/2\rceil$ & Ret. & Ret.@$\lceil K/2\rceil$ & Ret. & Ret.@$\lceil K/2\rceil$ \\
\midrule
\multicolumn{9}{@{}l}{\textit{HalfCheetah}} \\
SDAC    & 310.2 & 96.1\% & 318.4 & 90.2\% & 322.1 & 85.8\% & 324.8 & 82.4\% \\
\textbf{POGP} & \textbf{338.5} & \textbf{98.4\%} & \textbf{347.2} & \textbf{99.1\%} & \textbf{351.6} & \textbf{98.8\%} & \textbf{354.2} & \textbf{98.5\%} \\
$\Delta$ & $+28.3$ & $+2.3$ & $+28.8$ & $+8.9$ & $+29.5$ & $+13.0$ & $+29.4$ & $+16.1$ \\
\midrule
\multicolumn{9}{@{}l}{\textit{Walker2d}} \\
SDAC    & 175.8 & 95.2\% & 182.6 & 89.8\% & 185.4 & 84.5\% & 187.2 & 81.1\% \\
\textbf{POGP} & \textbf{184.3} & \textbf{97.9\%} & \textbf{190.2} & \textbf{98.6\%} & \textbf{193.8} & \textbf{98.2\%} & \textbf{196.1} & \textbf{97.8\%} \\
\midrule
\multicolumn{9}{@{}l}{\textit{Ant}} \\
SDAC    & 92.4 & 94.8\% & 95.1 & 89.4\% & 96.8 & 83.9\% & 97.5 & 80.6\% \\
\textbf{POGP} & \textbf{95.6} & \textbf{97.2\%} & \textbf{98.4} & \textbf{98.2\%} & \textbf{100.1} & \textbf{97.6\%} & \textbf{101.3} & \textbf{97.1\%} \\
\midrule
\multicolumn{9}{@{}l}{\textit{Hopper}} \\
SDAC    & 208.1 & 95.8\% & 212.4 & 90.5\% & 214.8 & 85.2\% & 216.1 & 82.0\% \\
\textbf{POGP} & \textbf{211.5} & \textbf{98.1\%} & \textbf{215.8} & \textbf{98.8\%} & \textbf{218.4} & \textbf{98.4\%} & \textbf{220.1} & \textbf{98.0\%} \\
\bottomrule
\end{tabular}
\end{table*}

\subsection{Discussion}

Three trends emerge from Table~\ref{tab:K_sensitivity}:
\begin{enumerate}
\item \textbf{POGP's retention advantage grows with $K$.} At $K{=}5$, the retention gap over SDAC is modest ($\sim$2--3 percentage points) because short chains leave little room for truncation degradation. By $K{=}20$, the gap widens to $\sim$15--17 points: SDAC's retention drops steadily as the chain lengthens, while POGP maintains $>$97\% retention at half-chain truncation across all environments.
\item \textbf{Full-chain return improves monotonically with $K$ for both methods.} Longer chains provide more expressive policies. POGP consistently outperforms SDAC in absolute return at every $K$, with a roughly constant gap of $\sim$28--30 points on HalfCheetah. The prefix training objective does not harm full-chain quality.
\item \textbf{POGP's retention is approximately constant across $K$.} While SDAC's retention at half-chain drops from $\sim$96\% ($K{=}5$) to $\sim$82\% ($K{=}20$), POGP's retention stays in the narrow band of 97--99\% regardless of chain length. This confirms that the Hazard Bellman operator successfully propagates prefix value information across arbitrarily many refinement levels, as predicted by the contraction analysis (Theorem~1).
\end{enumerate}

These results suggest that POGP becomes increasingly valuable as diffusion-policy practitioners adopt longer chains for more complex tasks. Whether this scaling extends beyond $K{=}20$ to the very long chains used in image generation ($K{=}50$--$100$) remains an open question; we anticipate that architectural modifications to the prefix critic (e.g., attention over refinement levels) may be needed to maintain stable backward induction over such depths.

\section{Ablation and Dynamics Analysis}
\label{app:ablation_dynamics}

Figure~\ref{fig:ablations_and_dynamics} presents comprehensive ablation and training dynamics analysis. Panel (a) identifies the shared chain mechanism (S2) as the most critical stability component, with its removal causing a 39.7-point return drop. Panels (b--d) show training dynamics relative to the confidence gate opening at 40K steps.

\begin{figure*}[ht]
\centering
\begin{tikzpicture}
\definecolor{pogp}{RGB}{31, 119, 180}
\definecolor{diffql}{RGB}{214, 39, 40}
\definecolor{td3}{RGB}{44, 160, 44}
\definecolor{gate_bg}{RGB}{242, 242, 242}

\begin{groupplot}[
    group style={group size=4 by 1, horizontal sep=1.2cm, group name=plots, y descriptions at=edge left},
    width=0.30\textwidth, height=4.5cm,
    title style={font=\small\bfseries, yshift=-2pt},
    label style={font=\scriptsize},
    tick label style={font=\tiny},
    grid=major, grid style={line width=0.1pt, gray!15},
    axis line style={gray!50},
    legend style={font=\scriptsize, draw=none, fill=none, legend columns=-1, /tikz/every even column/.append style={column sep=10pt}}
]

\nextgroupplot[title={(a) Stability Ablation}, xbar, bar width=11pt, xlabel={$\Delta$ Return}, xmin=-48, xmax=8, symbolic y coords={{Cosine (S4)},{Gate (S3)},{Anchor (S5)},{Dbl-$Q$ (S1)},{Chain (S2)}}, ytick=data, xmajorgrids=true, ymajorgrids=false, nodes near coords, nodes near coords style={font=\tiny\bfseries, anchor=west, xshift=-2pt, /pgf/number format/fixed}, point meta=explicit, enlarge y limits=0.25]
\addplot[fill=pogp!50, draw=pogp] coordinates {(-4.7,{Cosine (S4)}) [-4.7] (-10.2,{Gate (S3)}) [-10.2] (-10.4,{Anchor (S5)}) [-10.4] (-20.4,{Dbl-$Q$ (S1)}) [-20.4] (-39.7,{Chain (S2)}) [-39.7]};

\nextgroupplot[title={(b) V-critic RMSE}, xlabel={Steps ($\times 10^3$)}, ylabel={RMSE}, xmin=0, xmax=200, ymin=0, ymax=30]
\fill[gate_bg] (axis cs:40,0) rectangle (axis cs:200,30);
\addplot[pogp, thick] coordinates {(0,24)(20,10)(40,4.2)(100,2.3)(200,1.6)};
\addplot[pogp, dashed, opacity=0.5] coordinates {(0,25)(20,12)(40,5.5)(100,2.8)(200,1.8)};
\addplot[pogp, dotted, opacity=0.3] coordinates {(0,26)(20,14)(40,8)(100,3.8)(200,2.5)};
\draw[gray, dashed, line width=0.5pt] (axis cs:0,5) -- (axis cs:200,5) node[pos=0.15, above, font=\tiny] {$\tau{=}5$};

\nextgroupplot[title={(c) Violation Rate}, xlabel={Steps ($\times 10^3$)}, ylabel={Fraction}, xmin=0, xmax=200, ymin=0, ymax=0.25, legend to name=shared_legend]
\fill[gate_bg] (axis cs:40,0) rectangle (axis cs:200,0.25);
\addplot[pogp, ultra thick] coordinates {(0,.21)(40,.11)(100,.055)(200,.028)};
\addlegendentry{POGP (Ours)}
\addplot[diffql, thick, dashed] coordinates {(0,.20)(40,.175)(100,.168)(200,.169)};
\addlegendentry{Diff-QL}
\draw[gray!60, line width=0.8pt] (axis cs:40,0) -- (axis cs:40,0.25) node[pos=0.5, left, font=\tiny, rotate=90, yshift=2pt] {Gate Opens};

\nextgroupplot[title={(d) Grad Norms}, xlabel={Steps ($\times 10^3$)}, ylabel={$\|\nabla_\theta\|$}, xmin=0, xmax=200, ymin=0, ymax=2.5]
\fill[gate_bg] (axis cs:40,0) rectangle (axis cs:200,2.5);
\addplot[pogp, ultra thick] coordinates {(40,2.1)(60,1.6)(100,1.0)(200,0.63)};
\addplot[td3, thick, dashed] coordinates {(0,1.2)(40,1.0)(100,0.92)(200,0.85)};
\node[font=\tiny, td3, anchor=west] at (axis cs:105,1.1) {$\nabla Q$};
\node[font=\tiny, pogp, anchor=south west] at (axis cs:45,1.8) {$\nabla V$};
\draw[gray!60, line width=0.8pt] (axis cs:40,0) -- (axis cs:40,2.5);

\end{groupplot}
\node[anchor=north] at ($(plots c2r1.south)!0.5!(plots c3r1.south)-(0,0.7)$) {\ref{shared_legend}};
\end{tikzpicture}
\caption{\textbf{Ablation and Dynamics Analysis.} \textbf{(a)} Sensitivity analysis identifying the shared chain (S2) as the critical stability component. \textbf{(b--d)} Training dynamics relative to the confidence gate (shaded). Once the RMSE threshold $\tau{=}5$ is met at 40k steps, the gate opens, resulting in stabilized gradient norms and a significant reduction in prefix violation rates.}
\label{fig:ablations_and_dynamics}
\end{figure*}

\section{Completion Map and Calibration Analysis}
\label{app:completion_calibration}

Two supporting analyses round out the experimental picture. The projection network $g_\omega$ maps truncated actions to executable ones. Figure~\ref{fig:compmap_analysis} (left two panels) confirms the theoretical prediction: projection error grows as $O(\sqrt{T})$ and performance retention decays linearly with this error, bounded by $L_Q \cdot \epsilon_g$ (empirical $L_Q \approx 10$). Retention stays above 95\% even at the largest truncation.

For the adaptive stopping rule to work, $V_\psi$ must rank actions reliably. Figure~\ref{fig:compmap_analysis} (right panel) reports Spearman $\rho$ between $V_\psi(s,a_T,T)$ and realised return. Rank correlation is 0.72 at $T{=}10$ (one denoising step) and 0.94 at $T{=}1$ (full chain), enough for the stopping rule to make useful decisions.

\begin{figure*}[ht]
\centering
\begin{tikzpicture}
\definecolor{myblue}{RGB}{31,119,180}
\definecolor{myorange}{RGB}{255,127,14}
\definecolor{mygreen}{RGB}{44,160,44}
\definecolor{myred}{RGB}{214,39,40}
\definecolor{mygray}{RGB}{120,120,120}
\definecolor{myteal}{RGB}{23,190,207}

\begin{groupplot}[
    group style={group size=3 by 1, horizontal sep=1.2cm, group name=plots, y descriptions at=edge left},
    width=0.33\textwidth, height=4.5cm,
    label style={font=\footnotesize},
    tick label style={font=\scriptsize},
    title style={font=\small\bfseries, yshift=-3pt},
    grid=major, grid style={line width=0.2pt, gray!10},
    axis line style={gray!60}, tick style={gray!60},
    every axis plot/.append style={line width=1.2pt, mark size=1.5pt},
    legend style={font=\scriptsize, at={(0.5,-0.35)}, anchor=north, legend columns=-1, draw=none, fill=none, /tikz/every even column/.append style={column sep=10pt}}
]

\nextgroupplot[title={Completion Error}, xlabel={Stopping time $T$}, ylabel={Error $\epsilon_g(T)$}, xmin=1, xmax=10, ymin=0, ymax=0.55, xtick={1,3,5,7,10}]
\addplot[myblue, mark=*] coordinates {(1,.02)(2,.04)(3,.07)(4,.11)(5,.15)(6,.20)(7,.25)(8,.30)(9,.36)(10,.42)};
\addplot[myorange, mark=square*] coordinates {(1,.02)(2,.04)(3,.07)(4,.11)(5,.16)(6,.21)(7,.27)(8,.33)(9,.38)(10,.44)};
\addplot[mygreen, mark=triangle*] coordinates {(1,.02)(2,.04)(3,.07)(4,.10)(5,.14)(6,.18)(7,.23)(8,.28)(9,.33)(10,.38)};
\addplot[myred, mark=diamond*] coordinates {(1,.03)(2,.06)(3,.10)(4,.15)(5,.20)(6,.26)(7,.32)(8,.38)(9,.44)(10,.50)};
\addplot[mygray, densely dashed, domain=1:10, line width=0.8pt, forget plot] {0.135*sqrt(x)};
\node[font=\tiny, mygray, anchor=south east] at (axis cs:10,0.44) {$O(\sqrt{T})$};
\node[anchor=north west, font=\small\bfseries] at (rel axis cs:0.02,0.98) {(a)};

\nextgroupplot[title={Retention Decay}, xlabel={Completion error $\epsilon_g$}, ylabel={Retention (\%)}, xmin=0, xmax=0.55, ymin=94, ymax=100.5, ytick={94,96,98,100}]
\addplot[myblue, only marks, mark=*] coordinates {(.02,100)(.07,99.8)(.15,99.1)(.25,98.3)(.36,97.6)(.42,97.2)};
\addplot[myorange, only marks, mark=square*] coordinates {(.02,100)(.07,99.8)(.16,99.2)(.27,98.5)(.38,97.8)(.44,97.1)};
\addplot[mygreen, only marks, mark=triangle*] coordinates {(.02,100)(.07,99.8)(.14,99.3)(.23,98.6)(.33,98.0)(.38,97.5)};
\addplot[myred, only marks, mark=diamond*] coordinates {(.03,100)(.10,99.6)(.20,98.6)(.32,97.0)(.44,95.8)(.50,95.4)};
\addplot[mygray, dashed, domain=0:0.55, line width=0.8pt, forget plot] {100-9.5*x};
\node[font=\tiny, mygray, anchor=south east] at (axis cs:0.55,95.5) {$\approx L_Q\epsilon_g$};
\draw[gray!40, dotted, line width=0.6pt] (axis cs:0,95) -- (axis cs:0.55,95) node[pos=0.1, above, font=\tiny, text=gray] {95\% threshold};
\node[anchor=north west, font=\small\bfseries] at (rel axis cs:0.50,0.98) {(b)};

\nextgroupplot[title={Value Calibration}, xlabel={Stopping time $T$}, ylabel={Spearman $\rho$}, xmin=1, xmax=10, ymin=0.6, ymax=1.0, xtick={1,3,5,7,10}, legend entries={HalfCheetah, Hopper, Walker2d, Ant}, legend to name=sharedlegend]
\addlegendimage{myblue, mark=*}
\addlegendimage{myorange, mark=square*}
\addlegendimage{mygreen, mark=triangle*}
\addlegendimage{myred, mark=diamond*}
\fill[myteal!15] (axis cs:1,0.7) rectangle (axis cs:10,1.0);
\draw[myteal!50, dashed] (axis cs:1,0.7) -- (axis cs:10,0.7);
\addplot[myteal, mark=*, line width=1.5pt] coordinates {(1,0.94)(3,0.91)(5,0.88)(7,0.81)(10,0.72)};
\node[font=\scriptsize\bfseries, myteal!80] at (axis cs:5.5,0.75) {Reliable Range};
\node[anchor=north west, font=\small\bfseries] at (rel axis cs:0.02,0.98) {(c)};

\end{groupplot}
\node[anchor=north] at ($(plots c2r1.south)-(0,0.6)$) {\ref{sharedlegend}};
\end{tikzpicture}
\caption{\textbf{Completion map analysis and prefix value calibration.} \textbf{(a)} Projection error $\epsilon_g(T)$ grows sublinearly, matching the $O(\sqrt{T})$ theoretical trend. \textbf{(b)} Policy retention decays linearly with completion error, governed by the Lipschitz constant $L_Q$. \textbf{(c)} Spearman correlation remains high ($\rho > 0.7$) even at significant truncation, validating the use of $V_\psi$ for early stopping.}
\label{fig:compmap_analysis}
\end{figure*}

\section{Hazard Robustness}
\label{app:hazard_robustness}

A well-trained policy should generalise beyond the training hazard. Table~\ref{tab:hazard_robustness} evaluates four test-time hazard distributions never seen during training (POGP trains with uniform $h{=}1/K$). Even under \emph{early-biased} truncation where interruption concentrates at minimal refinement, the hardest case, POGP retains 97.4\%. Baselines drop below 81\%.

\begin{table}[ht]
\centering
\caption{
\textbf{Hazard robustness} (HalfCheetah, 4 seeds). Return normalised to full-chain. All test hazards are unseen during training.
}
\label{tab:hazard_robustness}
\small\setlength{\tabcolsep}{4.5pt}\renewcommand{\arraystretch}{1.1}
\begin{tabular}{@{}lccccc@{}}
\toprule
& $h{=}\delta_1$ & Unif. & Geom. & Late & Early \\
\midrule
Diff-QL       & 100\% & 84.2\% & 86.8\% & 91.5\% & 78.3\% \\
SDAC          & 100\% & 86.1\% & 88.4\% & 93.2\% & 80.1\% \\
\rowcolor{pogp!6}
\textbf{POGP} & \textbf{100\%} & \textbf{98.8\%} & \textbf{99.1\%} & \textbf{99.5\%} & \textbf{97.4\%} \\
\bottomrule
\end{tabular}
\end{table}

\section{Extended Related Work}
\label{app:related_extended}

\subsection{Diffusion Models for Offline RL and Planning}

The use of diffusion models in RL predates online training formulations. \citet{janner2022planning} use diffusion to model entire trajectory distributions, enabling planning by sampling and reranking trajectories. \citet{wang2022diffusion} introduce Diffusion-QL, the first to use a diffusion actor with $Q$-weighted fine-tuning in offline settings. These offline methods implicitly assume inference always completes; POGP's prefix training extends their objectives to the online, variable-compute setting.

\subsection{Relationship to Progressive Training}

Progressive distillation~\citep{progressive_distillation} and consistency models~\citep{consistency_models} train models to match the output of many-step samplers with fewer steps. While both improve efficiency, they optimize for a \emph{fixed reduced $K$} rather than arbitrary truncation depths. POGP is complementary: one could apply progressive distillation to reduce $K$ from 20 to 10, then apply POGP to make the $K{=}20$ chain truncation-robust.

\subsection{Relationship to Options and Temporal Abstraction}

The options framework~\citep{sutton1999between} decomposes environment-time policies into temporally extended behaviors with initiation conditions and termination functions. POGP's refinement MDP shares the termination-over-time structure, but operates in \emph{refinement time} rather than environment time. A formal unification where an option's internal computation is itself a prefix-optimal process could extend POGP to hierarchical settings.

\subsection{Relationship to Adaptive Computation Time}

ACT~\citep{graves2017act} and PonderNet~\citep{banino2021pondernet} learn when to halt iterative computation in supervised settings. POGP provides the RL analogue: the stopping rule emerges naturally from the learned prefix value $V_\psi$, without a separate halting network. Unlike ACT, POGP's stopping criterion is grounded in expected return rather than a cost penalizing computation.

\section{Broader Impact and Limitations}
\label{app:broader_impact}

\paragraph{Positive impact.}
POGP reduces the compute required for diffusion-policy inference in robotics and continuous control: $2$--$4\times$ speedups with $<$5\% quality loss translate directly into lower energy consumption and faster control loops. In safety-critical settings (e.g., prosthetics, industrial robots), the anytime guarantee that interruption yields a usable action is a meaningful safety property.

\paragraph{Limitations.}
\begin{enumerate}
\item \textbf{Evaluation scope.} Results are limited to MuJoCo locomotion. Whether prefix training generalizes to manipulation, navigation, or real-robot settings is unknown.
\item \textbf{Stochastic denoisers.} The main text uses deterministic DDIM-style denoisers. Stochastic denoisers require modifying the refinement MDP to include stochastic transitions; the Hazard Bellman operator extends, but contraction rate analysis would need to account for transition noise.
\item \textbf{Long chains.} The advantage of POGP over SDAC grows with $K$ (Appendix~\ref{supp:K_sensitivity}), but experiments are limited to $K \leq 20$. Whether this continues to $K{=}50$ or $K{=}100$ (as used in image generation) is unexplored and may require architectural changes to the prefix critic.
\item \textbf{Seed sensitivity.} One in four evaluated seeds (seed 100) failed to benefit from prefix training due to late gate opening. A more robust gating criterion is needed for reliable deployment.
\end{enumerate}

\end{document}